\definecolor{dark-blue}{RGB}{0,0,191}
\newcommand{\explain}[1]{\tag*{(#1)}}
\newcommand{\fS}{\mathcal{S}}
\newcommand{\fA}{\mathcal{A}}
\newcommand{\R}{\mathbb{R}}
\newcommand{\E}{\mathbb{E}}
\newcommand{\V}{\mathbb{V}}
\newcommand{\pdisg}{G^{\text{PDIS}}}
\newcommand{\tb}[1]{{\textbf{#1}}}
\def\argmin{\qopname\relax n{argmin}}
\newtheorem{theorem}{Theorem}
\newtheorem{lemma}{Lemma}
\title{Efficient Off-Policy Evaluation with Safety \\ Constraint
for Reinforcement Learning}
\author{Claire Chen\thanks{Equal Contribution.}\\
School of Arts and Science\\
University of Virginia\\
\texttt{clairechen@email.virginia.edu} \\
\And
Shuze Daniel Liu\footnotemark[1] \\
Department of Computer Science \\
University of Virginia\\
\texttt{shuzeliu@virginia.edu} \\
\And
Shangtong Zhang\\
Department of Computer Science \\
University of Virginia\\
\texttt{shangtong@virginia.edu} \\
}
\begin{document}

\maketitle

\begin{abstract}
In reinforcement learning, classic on-policy evaluation methods often suffer from high variance and require massive online data to attain the desired accuracy. Previous studies attempt to reduce evaluation variance by searching for or designing proper behavior policies to collect data. However, these approaches ignore the safety of such behavior policies---the designed behavior policies have no safety guarantee and may lead to severe damage during online executions. In this paper, to address the challenge of reducing variance while ensuring safety simultaneously, we propose an optimal variance-minimizing behavior policy under safety constraints. Theoretically, while ensuring safety constraints, our evaluation method is unbiased and has lower variance than on-policy evaluation. Empirically, our method is the only existing method to achieve both substantial variance reduction and safety constraint satisfaction. Furthermore, we show our method is even superior to previous methods in both variance reduction and execution safety.

\end{abstract}

\section{Introduction}
Recently, reinforcement learning (RL, \citet{sutton2018reinforcement}) has shown exceptional success in various sequential decision-making problems. For instance, the applications of RL algorithms have reduced energy consumption for Google's data center's cooling by $40\%$ \citep{chervonyi2022semianalytical}, solved Olympiad-level geometry problems \citep{trinh2024solving}, and designed general-purpose data center CPUs \citep{mirhoseini2021graph}. 
In RL applications, \textit{policy evaluation} 
enables RL practitioners to estimate the performance of a policy before committing to its full deployment. In policy evaluation, conventional wisdom uses the on-policy method, in which a policy (i.e., the target policy) is evaluated by directly executing itself. However, this straightforward approach is crude since using the target policy itself as the data-collecting policy (i.e., the behavior policy) is proved to be suboptimal \citep{liu2024efficient, liu2024doubly, liu2024multi}, resulting in evaluations with potentially high variance. Thus, the on-policy evaluation method may require massive online data to achieve the desired accuracy.

Unfortunately, collecting massive online data in real-world interaction can be both expensive and slow \citep{li2019perspective, Zhang_2023}. In Google's data center's cooling system, each interaction step in the actual deployment takes $5$ minutes \citep{chervonyi2022semianalytical}. Thus, the evaluation of a policy requiring millions of steps is prohibitively expensive.
To reduce the reliance on costly online data collection, offline RL has been introduced as a possible solution. However, mismatches between the offline data distribution and the distribution induced by the target policy frequently arise, resulting in bias that is both uncontrolled and difficult to eliminate \citep{jiang2015doubly, farahmand2011model, marivate2015improved}. As a result, both online and offline RL practitioners still depend heavily on online policy evaluation techniques \citep{kalashnikov2018scalable, vinyals2019grandmaster}.

To improve the online sample efficiency for policy evaluation, existing methods propose to reduce the evaluation variance by searching for or designing proper behavior policies \citep{hanna2017data, zhong2022robust, liu2024efficient}. However, all their approaches ignore a critical issue:\textit{ safety}. In many real-world applications, neglecting safety in policy execution can result in serious consequences. For example, in Google's data center cooling system, a behavior policy that is tailored for variance reduction without considering safety constraints may unpredictably overheat the system, causing equipment damages or service disruptions. Therefore, besides reducing evaluation variance, it is crucial to guarantee execution safety. 

In this paper, we address the challenge of reducing variance while ensuring safety simultaneously. We make the following contributions:
\begin{enumerate}
\item We propose an optimal variance-minimizing behavior policy under safety constraints.
\item Theoretically, we show that our method gives an unbiased estimation. In addition to strictly satisfying the safety constraints, our method is proven to attain lower variance than the classic on-policy evaluation method.
\item Empirically, we show that our method is the only existing method 
to achieve both substantial variance reduction and constraint satisfaction.
Moreover, it is even superior to previous methods in both variance reduction and execution safety.

\end{enumerate}

\section{Related Work}\label{sec: related work} 
\textbf{Safe RL.}
Safety in reinforcement learning, often framed as safe RL \citep{garcia2015comprehensive}, has been an active research topic recently.
Many recent works focus on safety in policy exploration and optimization \citep{brunke2022safe}. For safe exploration,
\citet{moldovan2012safe} present a method for ensuring safe exploration by keeping the agent within a predefined set of safe states during its learning process. However, their method is a model-based approach, requiring an explicit approximation of the transition function,  which introduces challenges common to model learning, such as compounding errors and the need for accurate model dynamics \citep{sutton1990integrated,sutton2012dyna,deisenroth2011pilco,chua2018deep}. \textit{In contrast, our approach does not rely on approximating the transition function (i.e., model-free), since parameters can be estimated by 
off-the-shelf offline policy evaluation methods (e.g. Fitted Q-Evaluation, \citet{le2019batch}).}
As for safe optimization, \citet{berkenkamp2017safe} propose to ensure safety by keeping the agent within safe regions, which are characterized by a Lyapunov function. However, they assume the environment to be deterministic, i.e., $p(s'|s,a)=1$ for the successive state $s'$, which is a significant limitation as most MDPs
are stochastic. Their method is also model-based, requiring knowledge of the transition functions. \textit{In contrast, our approach copes well with stochastic environments and is model-free.} 

Safe reinforcement learning is often modeled as a Constrained Markov
Decision Process (CMDP) \citep{gu2022review, wachi2024survey, liu2021policy}, in which we need to maximize the agent reward
while making agents satisfy safety constraints. \citet{achiam2017constrained} enforce a constant threshold to constrain the expected total cost. 
However, even though they adopt the trust-region method to control policy updates, the expected total cost of the new policy can still exceed the safety threshold at each update step, leading to uncontrolled violations of the safety constraints over time. \textit{In contrast, our method inherently integrates safety constraints into the policy design, ensuring strict satisfaction of constraints throughout execution.} 
\citet{wachi2020safe} 
propose a method for safe reinforcement learning in constrained Markov decision processes (CMDPs) by using a Gaussian Process to model the safety constraints and guide exploration. Nevertheless, their approach needs to compute the covariance matrix between explored states throughout the execution, which is computationally expensive, especially in environments with large state spaces. In addition, they assume that the state transitions are deterministic, making their method highly restricted. \textit{In contrast, our algorithm does not rely on knowledge about the complicated covariances and copes with stochastic environments.}

\textbf{Variance Reduction. }
Variance reduction for policy evaluation in reinforcement learning (RL) is also widely explored. Since using the target policy as the data-collecting policy (i.e., the behavior policy) for evaluating itself is not optimal \citep{owen2013monte}, some recent studies focus on searching for or designing a data-reducing behavior policy without considering safety. 
\citet{hanna2017data} formulate the task of searching for a variance-reduction behavior policy as an optimization problem. They parameterize the behavior policy and use stochastic gradient descent to update the policy. 
However, in reinforcement learning, the stochastic method has been known to easily get stuck in highly suboptimal points in just moderately complex environments \citep{williams1992simple}, where various local optimal and saddle points exist.
Besides, they do not consider the safety of their obtained behavior policy, which might cause damage during execution. \textit{In contrast, our method directly learns the globally optimal behavior policy with safety guarantees.}
Moreover, to learn the behavior policy effectively, their method requires highly sensitive hyperparameter tuning, where the learning rate can vary by up to $10^5$ times across different environments, as reported by \citet{hanna2017data}. This  sensitivity requires online tuning, which consumes online data.
\textit{In contrast, we introduce an efficient algorithm to learn our behavior policy with purely offline data. }Furthermore, their methods need the online data to be complete trajectories. \textit{In contrast, our algorithm copes well with incomplete offline tuples, making it widely applicable.} 

\citet{zhong2022robust} also design a variance-reducing behavior policy for policy evaluation. They adjust the behavior policy to focus on under-sampled data segments. However, their method requires the offline data to be in complete trajectories, and the data must be generated by known behavior policies that are highly similar to the target policy. These assumptions are strong.
\textit{ In contrast, our method effectively handles incomplete offline segments from potentially unknown and diverse behavior policies.} 
\citet{zhong2022robust} also ignore safety constraints in their behavior policy, leading to potential risks in executing the data-collecting behavior policy. \textit{In contrast, our method incorporates safety constraints into the search for the optimal variance-reducing behavior policy, ensuring safety throughout the execution.} 

Using the per-decision importance sampling estimator, \citet{mukherjee2022revar} also propose a behavior policy to reduce variance. Nevertheless, they also do not consider the crucial safety problem. Besides, their results are restricted to tree-structured MDPs, which is a significant limitation.
\textit{In contrast, our method is applicable on general CMDPs, extensions of the widely studied MDPs.} Moreover, \citet{mukherjee2022revar} also leverage a model-based approach.
The current best method in behavior policy design is proposed by \citet{liu2024efficient}, where they find an optimal and an offline-learnable behavior policy.
However, all these approaches \citep{hanna2017data, zhong2022robust, liu2024efficient} focus solely on reducing evaluation variance while neglecting a critical issue: safety. Without integrating safety constraints into the design of the behavior policy, its online execution could cause unforeseen and severe damage.
\textit{In contrast, we consider the variance minimization problem with safety constraints, obtaining a behavior policy that is safe throughout the execution, while simultaneously achieving substantial variance reduction compared with the on-policy method.}

\section{Background}
A finite Markov decision process (MDP, \citet{puterman2014markov}) includes a finite state space $\fS$, a finite action space $\fA$, 
a reward function $r: \fS \times \fA \to \R$,
a transition probability function $p: \fS \times \fS \times \fA \to [0, 1]$,
an initial state distribution $p_0: \fS \to [0, 1]$,
and a constant horizon length $T$. In this paper, to impose safety constraints, we consider constrained Markov decision processes (CMDPs, \citet{altman2021constrained}), which extends the MDPs with a cost function $c: \fS \times \fA \to [0,\infty)$. 
Without loss of generality, 
we consider the undiscounted setting for simplicity.
Our method is compatible with the discounted setting \citep{puterman2014markov} as long as the horizon is fixed and finite. For any integer, 
we define a shorthand $[n] \doteq \qty{0, 1, \dots, n}$. For any set $\mathcal{X}$, we use $|\mathcal{X}|$ to denote its cardinality. We use $\Delta^{|\mathcal{X}| - 1}$ to denote the $(|\mathcal{X}| - 1)$-dimensional probability simplex, 
representing the set of all probability distributions over the set $\mathcal{X}$.

A constrained Markov decision process (CMDP) begins at time step $0$, where
an initial state $S_0$ is sampled from $p_0$.
At each time step $t \in [T-1]$,
an action $A_t$ is sampled based on $\pi_t(\cdot \mid S_t)$. Here, $\pi_t: \fA \times \fS \to [0, 1]$ denotes the policy at time step $t$. 
Thereafter, a reward $R_{t+1} \doteq r(S_t, A_t)$ and a  cost $C_{t+1}\doteq c(S_t,A_t)$ is emitted by the environment. A successor state $S_{t+1}$ is then sampled from $p(\cdot \mid S_t, A_t)$. 
We define the abbreviation $\pi_{i:j} \doteq \qty{\pi_i, \pi_{i+1}, \dots, \pi_j}$ and $\pi \doteq \pi_{0:T-1}$.
At each time step $t$, the return for the reward $r$ is defined as 
$
  G_t \doteq \sum_{i={t+1}}^T R_i,
$
and the return for the cost $c$ is $G^c_t\doteq  \sum^T_{i={t+1}} C_i$. Then, we define the state-value and action-value functions for the reward $r$ as
$
v_{\pi, t}(s) \doteq \E_{\pi}\left[G_t \mid S_t = s\right]$ and $
q_{\pi, t}(s, a) \doteq \E_{\pi}\left[G_t \mid S_t = s, A_t = a\right].
$
Similarly, the state-value and action-value functions for the cost $c$ are defined as
$
v^c_{\pi, t}(s) \doteq \E_{\pi}\left[G^c_t \mid S_t = s\right]$ and $
q^c_{\pi, t}(s, a) \doteq \E_{\pi}\left[G^c_t \mid S_t = s, A_t = a\right].
$
We adopt the total rewards performance metric from \citet{puterman2014markov} to measure the performance of the policy $\pi$,
which is defined as 
$J(\pi) \doteq \sum_s p_0(s) v_{\pi, 0}(s)$.
Likewise, we also define the total costs of $\pi$ as
$J^c(\pi)\doteq \sum_s p_0(s) v_{\pi, 0}^c(s)$.
In this paper, we use Monte Carlo
methods, as introduced by \citet{kakutani1945markoff}, for estimating the total rewards $ J(\pi)$. 
The most straightforward and prevalent technique among many of its variants is to draw samples of $J(\pi)$ through the online execution of the policy $\pi$.
As the number of samples increases, the empirical average of the sampled returns is guaranteed to converge to $J(\pi)$.
This method is called on-policy learning (\citealt{sutton1988learning}) as it estimates a policy $\pi$ by executing itself, 

In this work, we focus on off-policy evaluation, in which the goal is to estimate the total rewards $J(\pi)$ of an interested policy $\pi$, called the \textit{target policy} 
by executing a different policy $\mu$,
called the \textit{behavior policy}. 
We generate each trajectory
$
\textstyle \qty{S_0, A_0, R_1, C_1, S_1, A_1, R_2, C_2, \dots, S_{T-1}, A_{T-1}, R_T, C_T}
$ by a behavior policy $\mu$ with
$
  S_0 \sim p_0, A_{t} \sim \mu_{t}(\cdot | S_{t}).
$
For simplicity, we use a shorthand $\tau^{\mu_{t:T-1}}_{t:T-1}$ for a trajectory generated by the behavior policy $\mu$ from the time step $t$ to the time step $T-1$ inclusively. It is defined as
$
\tau^{\mu_{t:T-1}}_{t:T-1} \doteq \qty{S_t, A_t, R_{t+1}, C_{t+1}\dots, S_{T-1}, A_{T-1}, R_{T}, C_{T}}.
$
In off-policy evaluation, to give an estimate of $J(\pi)$, we adopt the importance sampling ratio to reweigh rewards collected by the behavior policy $\mu$.
We define the importance sampling ratio at time $t$ as
$
\textstyle \rho_t \doteq \frac{\pi_t(A_t \mid S_t)}{\mu_t(A_t \mid S_t)}.
$
We also define the product of importance sampling ratios from time $t$ to $t' \geq t$ as 
$
\textstyle \rho_{t:t'} \doteq \prod_{k=t}^{t'} \frac{\pi_k(A_k | S_k)}{\mu_k(A_k | S_k)}.
$
Various methods utilize importance sampling ratios within off-policy learning frameworks \citep{geweke1988antithetic, hesterberg1995weighted, koller2009probabilistic,thomas2015safe}.
In this paper,
We study the per-decision importance sampling estimator (PDIS, \citet{precup:2000:eto:645529.658134}).
The PDIS Monte Carlo estimator is defined as $\textstyle \pdisg(\tau^{\mu_{t:T-1}}_{t:T-1}) \doteq \sum_{k=t}^{T-1} \rho_{t:k} R_{k+1}.$
We also use the recursive expression of the PDIS estimator as
\begin{align}\label{eq:PDIS-recursive}
\pdisg(\tau^{\mu_{t:T-1}}_{t:T-1}) 
=&\begin{cases}
\rho_t \left(R_{t+1} + \pdisg(\tau^{\mu_{t+1:T-1}}_{t+1:T-1})\right) & t \in [T-2], \\
\rho_tR_{t+1} & t = T-1.
\end{cases}
\end{align}
With the classic policy coverage assumption \citep{precup:2000:eto:645529.658134,maei2011gradient,sutton2016emphatic,zhang2022thesis,liu2024ode} 
$\forall t, s, a, \quad\mu_t(a|s) = 0 \implies \pi_t(a|s) = 0, $
$\pdisg$ provides an \emph{unbiased} estimation  for $J(\pi)$, i.e., $\E\qty[\pdisg(\tau^{\mu_{0:T-1}}_{0:T-1})] = J(\pi).   $
Since the PDIS estimator is unbiased,
reducing its variance is sufficient for improving its sample efficiency.
We achieve this variance reduction by designing and learning proper behavior policies.

\section{Constrained Variance Minimization for Contextual Bandits}
\label{section: bandit}
In this section, we focus on variance minimization in policy evaluation under safety constraints in contextual bandits. These discussions provide the foundation for the more complicated optimization problems in sequential reinforcement learning settings, which we explore in Section~\ref{section: sequential}. Notations defined in this section are independent of the rest of the paper.

We consider contextual bandits as one-step CMDPs, where the trajectories are in the form of $(s,a,r,c)$. 
To estimate the performance of the target policy $\pi$, $\E_{a\sim \pi}[r(s,a)]$, with data collected by a behavior policy $\mu$, we adopt the importance sampling ratio \citep{Rubinstein1981Simulation} to reweigh the reward collected by $\mu$. That is, we use $\E_{a\sim \mu}[\rho(a|s)r(s,a)]$ as an estimator, where $\rho(a|s)=\frac{\pi(a|s)}{\mu(a|s)}$. 
Recall $\Delta^{|\mathcal{A}| - 1}$ is the probability simplex representing all probability distributions over the set $\mathcal{A}$.
To ensure that this off-policy evaluation is unbiased,
a classic choice by \citet{Rubinstein1981Simulation} searches for $\mu$ in
\begin{align}
\Lambda_- \doteq \qty{\mu \mid \forall s, a, \mu(a|s) = 0 \Rightarrow \pi(a|s) = 0 \wedge  \forall s, \mu(\cdot | s) \in \Delta^{|\fA| - 1}}.  \label{eq stats search space small}
\end{align}
In this work, we search in an enlarged space $\Lambda$ \citep{owen2013monte, liu2024efficient}, where 
\begin{align}
 \Lambda \doteq \qty{\mu  \mid \forall s,a, \mu(a|s) = 0 \Rightarrow \pi(a|s)r(s,a) = 0 \wedge  \forall s, \mu(\cdot | s) \in \Delta^{|\fA| - 1}}.
  \label{eq: stats search space}
\end{align} 
Although a behavior policy $\mu$ in $\Lambda$ may not cover the target policy $\pi$,  $\mu$ still gives unbiased estimation in statistics. In the following lemma, we show that searching for $\mu$ in this enlarged space $\Lambda$ guarantees unbiasedness in the contextual bandits setting.
\begin{lemma}
\label{lemma: unbias bandits}
    $\forall \mu \in \Lambda$, $\forall s$,
    \begin{align}
    \E_{a\sim \mu}[\rho(a|s)r(s,a)] =\E_{a\sim \pi}[r(s,a)].   
    \end{align}
\end{lemma}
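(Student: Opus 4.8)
The plan is to expand both sides as finite sums over the action set $\fA$ and to track carefully the terms where $\mu(a \mid s) = 0$, since on those the ratio $\rho(a \mid s) = \pi(a \mid s)/\mu(a \mid s)$ is ill-defined. Fix a state $s$. The expectation on the left-hand side ranges only over actions actually realizable under $\mu$, i.e.\ those with $\mu(a \mid s) > 0$, so I would write
\begin{align}
\E_{a\sim\mu}[\rho(a\mid s)r(s,a)] = \sum_{a:\, \mu(a\mid s) > 0} \mu(a\mid s)\,\frac{\pi(a\mid s)}{\mu(a\mid s)}\, r(s,a) = \sum_{a:\, \mu(a\mid s) > 0} \pi(a\mid s)\, r(s,a),
\end{align}
where the cancellation of $\mu(a\mid s)$ is legitimate precisely because we restrict to its support.

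Next I would expand the right-hand side over the full action set and split it according to the support of $\mu$:
\begin{align}
\E_{a\sim\pi}[r(s,a)] = \sum_{a} \pi(a\mid s)\, r(s,a) = \sum_{a:\, \mu(a\mid s)>0}\pi(a\mid s)\,r(s,a) + \sum_{a:\, \mu(a\mid s)=0}\pi(a\mid s)\,r(s,a).
\end{align}
Comparing with the previous display, the two sides differ exactly by the second sum, taken over actions with $\mu(a\mid s) = 0$.

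The key step---and the only place the enlarged space $\Lambda$ is used---is to show this residual sum vanishes. By the defining condition of $\Lambda$ in \eqref{eq: stats search space}, every action $a$ with $\mu(a\mid s) = 0$ satisfies $\pi(a\mid s)\,r(s,a) = 0$. Hence each summand in $\sum_{a:\, \mu(a\mid s)=0}\pi(a\mid s)\,r(s,a)$ is zero, so the residual sum is zero and the two expectations coincide, which is the claim.

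I expect the main obstacle to be conceptual rather than computational: one must justify that the terms with $\mu(a\mid s)=0$, on which $\rho$ is undefined, genuinely contribute nothing to either expectation. This is exactly what distinguishes $\Lambda$ from the classical coverage set $\Lambda_-$: in $\Lambda_-$ the coverage condition forces $\pi(a\mid s)=0$ on the complement of the support of $\mu$, whereas in $\Lambda$ we only need the weaker $\pi(a\mid s)\,r(s,a)=0$, which still kills the residual sum while allowing $\mu$ to drop actions that the target policy takes but that carry zero reward.
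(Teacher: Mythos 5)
Your proposal is correct and follows essentially the same route as the paper's own proof: expand the off-policy expectation as a sum over the support of $\mu$, cancel the $\mu(a\mid s)$ factors, and use the defining condition of $\Lambda$ to show that the actions outside the support of $\mu$ contribute nothing to $\E_{a\sim\pi}[r(s,a)]$. No gaps; the justification for discarding the $\mu(a\mid s)=0$ terms is exactly the one the paper relies on.
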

Its proof is in Appendix~\ref{appendix: unbias bandits}. Our goal is to search for a variance-minimizing behavior policy $\mu$. Except for the unbiasedness guaranteed by the search space $\Lambda$, we also require $\mu$ to satisfy safety constraints which will be defined later. 
We formulate the variance minimization objective as, $\forall s$,
\begin{align}
\min_{\mu\in \Lambda} \quad \V_{a\sim \mu}(\rho(a|s)r(s,a)).\label{eq: variance object plain bandits}
\end{align}

Then, with the unbiasedness in Lemma~\ref{lemma: unbias bandits}, we can further decompose the objective in \eqref{eq: variance object plain bandits} as
\begin{align}
\label{eq: bandits obective equation}
\V_{a\sim\mu}(\rho(a|s)r(s,a))=&\E_{a\sim\mu}[(\rho(a|s)r(s,a))^2]-\E_{a\sim\mu}[\rho(a|s)r(s,a)]^2\\
=&\E_{a\sim\mu}[\rho(a|s)^2r(s,a)^2]-\E_{a\sim\pi}[r(s,a)]^2. \explain{By Lemma~\ref{lemma: unbias bandits}}
\end{align}
Since the second term is a constant and is unrelated to $\mu$, it suffices to solve
\begin{align}
\min_{\mu\in \Lambda} \quad & \E_{a\sim\mu}[\rho(a|s)^2r(s,a)^2].\label{eq: bandits objective modified}
\end{align}
Next, to ensure the safety of executing the behavior policy $\mu$, we incorporate a safety constraint into the variance minimization problem. Since measuring safety by the expected cost is a common approach in the safety RL community \citep{berkenkamp2017safe, achiam2017constrained, chow2018lyapunov}, we require that the expected cost of $\mu$ remains within a threshold related to the expected cost of $\pi$. 
Given a safety parameter $\epsilon\in [0,\infty)$, define a cost threshold
\begin{align}
\delta_\epsilon(s)\doteq(1+\epsilon)\E_{a\sim \pi}[c(s,a)].
\end{align}
We impose the following constraint to the optimization problem  \eqref{eq: bandits objective modified}
\begin{align}
    \E_{a\sim \mu}[c(s,a)]\leq \delta_\epsilon(s), \quad \forall s. \label{eq: safety constraint bandits}
\end{align}
This constraint requires that the expected cost of the designed behavior policy $\mu$ should be smaller than the multiple of the expected cost of the target policy $\pi$. 
By satisfying this constraint, we maintain a desired level of safety during the execution of the behavior policy $\mu$. This safety is defined with respect to the target policy $\pi$, which is executed 
in the classic on-policy evaluation method.
By setting $\epsilon = 0$, behavior policies satisfying this constraint are guaranteed to be safer than the target policy.

Notably, another line of research focused on policy safety chooses a constant threshold for the expected cost. We can simply modify \eqref{eq: safety constraint bandits} into a constant-threshold constraint by replacing the threshold function $\delta_\epsilon (s)$ with a constant $\delta$. 
However, such absolute thresholds may make optimization problems infeasible. Strong assumptions on environments and policies have to be made to guarantee the existence of feasible solutions under absolute threshold \citep{achiam2017constrained}. 
In this paper, we impose the safety constraint with respect to the target policy $\pi$, because our goal is to design a safe behavior policy to address the high variance associated with classic on-policy evaluation methods.
The parameter $\epsilon$ in our threshold allows RL practitioners to adjust safety tolerance based on the specific requirements of the problem, as safety constraints are often highly problem-dependent \citep{achiam2017constrained}. In Section~\ref{sec: experiment}, we demonstrate our method in sequential reinforcement learning with a harsh threshold, $\epsilon=0$, achieving both variance and cost reduction compared to the on-policy method.

We formally define our optimization problem and prove its convexity and feasibility in the following theorem.
\begin{lemma}
\label{lemma: convex bandits}
For all $\epsilon$ and $s$, the following optimization problem is convex and feasible. 
\begin{align}
\label{eq: optimization object bandits}
\min_{\mu\in \Lambda} \quad& \E_{a\sim\mu}[\rho(a|s)^2r(s,a)^2],\\
\text{s.t.} \quad&\E_{a\sim \mu}[c(s,a)]\leq \delta_\epsilon(s). \label{eq: safety constraint bandits appendix}
\end{align}
\end{lemma}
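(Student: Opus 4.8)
The plan is to fix the context $s$ throughout and reduce the statement to an elementary claim about a finite-dimensional program over the simplex. First I would expand the objective. Writing $\rho(a\mid s)=\pi(a\mid s)/\mu(a\mid s)$ and $\E_{a\sim\mu}[\cdot]=\sum_a \mu(a\mid s)(\cdot)$, the objective becomes
\begin{align}
\E_{a\sim\mu}\!\qty[\rho(a\mid s)^2 r(s,a)^2]=\sum_a \frac{\pi(a\mid s)^2 r(s,a)^2}{\mu(a\mid s)},
\end{align}
with the convention (consistent with how $\Lambda$ is defined) that a summand is $0$ whenever $\pi(a\mid s)r(s,a)=0$. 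It therefore suffices to work on the index set $\fA_s^{+}\doteq\qty{a : \pi(a\mid s)r(s,a)\neq 0}$, on which the defining condition of $\Lambda$ forces $\mu(a\mid s)>0$, so every retained summand is finite and well defined.

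For convexity I would note that each retained summand has the form $c_a/\mu(a\mid s)$ with $c_a\doteq \pi(a\mid s)^2 r(s,a)^2\geq 0$, and that $x\mapsto c_a/x$ is convex on $(0,\infty)$ since its second derivative $2c_a/x^3$ is nonnegative (equivalently, this is the quadratic-over-linear function, which is jointly convex on the positive orthant). Because a summand depends on $\mu$ only through the single coordinate $\mu(a\mid s)$, it is convex as a function of the full vector $\mu(\cdot\mid s)$, and a finite sum of convex functions is convex; hence the objective is convex. The feasible region is an intersection of convex sets: the probability simplex $\Delta^{\na-1}$, the support condition from $\Lambda$ (which on $\fA_s^{+}$ is the convex constraint $\mu(a\mid s)>0$), and the safety constraint \eqref{eq: safety constraint bandits appendix}, which is linear in $\mu$ because $\E_{a\sim\mu}[c(s,a)]=\sum_a \mu(a\mid s)c(s,a)$ and $\delta_\epsilon(s)$ is a constant. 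An intersection of convex sets is convex, so the problem minimizes a convex objective over a convex set, i.e.\ it is a convex program.

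For feasibility I would exhibit the explicit candidate $\mu=\pi$. It lies in $\Lambda$ because $\pi(a\mid s)=0$ trivially implies $\pi(a\mid s)r(s,a)=0$, and $\pi(\cdot\mid s)$ is already a distribution in $\Delta^{\na-1}$. It satisfies the safety constraint because
\begin{align}
\E_{a\sim\pi}[c(s,a)]\leq (1+\epsilon)\,\E_{a\sim\pi}[c(s,a)]=\delta_\epsilon(s),
\end{align}
where the inequality uses $\epsilon\geq 0$ together with $c\geq 0$ (so the on-policy expected cost is nonnegative). Hence the feasible set is nonempty.

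The main obstacle is not a hard estimate but careful bookkeeping at the boundary: one must restrict to $\fA_s^{+}$ so that the objective stays inside the domain $(0,\infty)$ on which $1/x$ is convex and the degenerate $0/0$ summands are excluded, and one must check that the \emph{enlarged} support condition defining $\Lambda$ (rather than the stricter coverage condition $\Lambda_-$) still produces a convex feasible set. Once the restriction to $\fA_s^{+}$ is in place, both convexity and feasibility reduce to the elementary facts above.
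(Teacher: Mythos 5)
Your proposal is correct and follows essentially the same route as the paper's proof: expand the objective as $\sum_a \pi(a|s)^2 r(s,a)^2/\mu(a|s)$, verify convexity of each summand via the second derivative of $x\mapsto c_a/x$ on $(0,\infty)$, observe that the simplex, support, and linear cost constraints each cut out convex sets, and exhibit $\mu=\pi$ as a feasible point. Your explicit restriction to $\fA_s^{+}$ to handle the $0/0$ summands and the open constraint $\mu(a|s)>0$ is in fact slightly more careful than the paper's treatment of the support condition, but it is the same argument.
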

Its proof is in Appendix~\ref{appendix: convex bandits}.
Use $\mu^*$ to denote the optimal solution of the above optimization problem.  We have the following lemma.
\begin{lemma}
\label{lemma: optimal bandits}
For all $ \epsilon$ and $s$,  let $\mu^*$ be the optimal solution of optimization problem \eqref{eq: optimization object bandits},  we have 
\begin{align}
\V_{a\sim \mu^*}(\rho(a|s)r(s,a))\leq \V_{a\sim \pi}(r(s,a)).
\end{align}
\end{lemma}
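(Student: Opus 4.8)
The goal is to show that the optimal constrained behavior policy $\mu^*$ achieves variance no larger than the on-policy variance. The key observation is that the on-policy choice $\mu = \pi$ is itself a feasible point of the optimization problem, so by optimality of $\mu^*$ we get an inequality on the \emph{objective}, which we then convert back into a statement about the \emph{variance}.

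\textbf{Plan.} First I would verify that $\pi$ is feasible for the problem \eqref{eq: optimization object bandits}. For the search-space membership, note $\pi \in \Lambda$ trivially, since $\mu = \pi$ satisfies $\pi(\cdot|s)\in\Delta^{|\fA|-1}$ and the implication $\pi(a|s)=0 \Rightarrow \pi(a|s)r(s,a)=0$ holds vacuously. For the safety constraint \eqref{eq: safety constraint bandits appendix}, evaluating at $\mu=\pi$ gives $\E_{a\sim\pi}[c(s,a)] \le (1+\epsilon)\E_{a\sim\pi}[c(s,a)] = \delta_\epsilon(s)$, which holds for every $\epsilon \in [0,\infty)$ because $c \ge 0$ forces $\E_{a\sim\pi}[c(s,a)] \ge 0$. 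Hence $\pi$ lies in the feasible set.

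\textbf{Main argument.} Since $\mu^*$ is the minimizer of the objective over the feasible set and $\pi$ is feasible, optimality yields
\begin{align}
\E_{a\sim\mu^*}[\rho(a|s)^2 r(s,a)^2] \;\le\; \E_{a\sim\pi}[\rho_\pi(a|s)^2 r(s,a)^2],
\end{align}
where evaluating the importance ratio at $\mu=\pi$ gives $\rho_\pi(a|s) = \pi(a|s)/\pi(a|s) = 1$ on the support of $\pi$, so the right-hand side collapses to $\E_{a\sim\pi}[r(s,a)^2]$. Next I would reintroduce the variance using the decomposition in \eqref{eq: bandits obective equation}. By Lemma~\ref{lemma: unbias bandits}, $\mu^* \in \Lambda$ is unbiased, so
\begin{align}
\V_{a\sim\mu^*}(\rho(a|s)r(s,a)) = \E_{a\sim\mu^*}[\rho(a|s)^2 r(s,a)^2] - \E_{a\sim\pi}[r(s,a)]^2.
\end{align}
Applying the optimality inequality to the first term and then recognizing $\E_{a\sim\pi}[r(s,a)^2] - \E_{a\sim\pi}[r(s,a)]^2 = \V_{a\sim\pi}(r(s,a))$ by the definition of variance completes the chain, giving $\V_{a\sim\mu^*}(\rho(a|s)r(s,a)) \le \V_{a\sim\pi}(r(s,a))$.

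\textbf{Anticipated obstacle.} The argument is essentially a two-line feasibility-plus-optimality sandwich, so no genuinely hard step is expected. The one place needing care is the reduction of the objective at $\mu=\pi$: one must be careful that the importance ratio $\rho$ is defined relative to the behavior policy being evaluated, so substituting $\mu=\pi$ makes $\rho \equiv 1$ on $\mathrm{supp}(\pi)$ and the objective genuinely reduces to the raw second moment $\E_{a\sim\pi}[r(s,a)^2]$ rather than something involving a nontrivial ratio. A subtle point worth checking is that both the objective at $\mu^*$ and the variance decomposition share the \emph{same} constant subtracted term $\E_{a\sim\pi}[r(s,a)]^2$, which is exactly what allows the inequality on second moments to pass through to an inequality on variances; this hinges on the unbiasedness of Lemma~\ref{lemma: unbias bandits} holding for $\mu^*$, which is guaranteed since $\mu^* \in \Lambda$.
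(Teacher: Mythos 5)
Your proposal is correct and follows essentially the same route as the paper's proof: establish that $\pi$ lies in the feasible set (both via $\pi\in\Lambda$ and via $\E_{a\sim\pi}[c(s,a)]\leq(1+\epsilon)\E_{a\sim\pi}[c(s,a)]$ for $\epsilon\geq 0$), invoke optimality of $\mu^*$ on the second-moment objective, and convert back to variances using the decomposition \eqref{eq: bandits obective equation} together with the unbiasedness of Lemma~\ref{lemma: unbias bandits}. Your added care about $\rho\equiv 1$ on the support of $\pi$ and the shared subtracted constant matches the paper's chain of equalities exactly.
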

\begin{proof}
We first show that the target policy $\pi$ is always in the feasible set of the optimization problem \eqref{eq: optimization object bandits}. We define the set of feasible policies as 
\begin{align}
\textstyle\mathcal{F}\doteq\{ \mu\in \Lambda \mid \forall \epsilon,s, \E_{a\sim \mu}[c(s,a)]\leq \delta_\epsilon(s)\}.\label{eq: feasible set bandits}
\end{align}
Because $\epsilon \in [0, \infty)$, for the safety constraint, we have
\begin{align}
\E_{a\sim \pi}[c(s,a)] \leq (1+\epsilon) \E_{a\sim \pi}[c(s,a)] =  \delta_\epsilon(s).
\end{align}
By the definition of $\Lambda$ \eqref{eq: stats search space}, $\pi\in \Lambda$. 
Thus, $\pi\in\mathcal{F}$.
Because 
\begin{align}
\mu^* \doteq&\argmin_{\mu\in \mathcal{F}} \textstyle \E_{a\sim\mu}[\rho(a|s)^2r(s,a)^2] \label{eq: mu star argmin}
\end{align}
is the optimal solution, we have
\begin{align}
&\V_{a \sim \mu^*}(\rho(a|s)r(s,a)) \\
=& \E_{a\sim\mu^*}[\rho(a|s)^2r(s,a)^2]-\E_{a\sim\pi}[r(s,a)]^2 \explain{by \eqref{eq: bandits obective equation}} \\
\leq& \E_{a\sim\pi}[\rho(a|s)^2r(s,a)^2]-\E_{a\sim\pi}[r(s,a)]^2  \explain{by \eqref{eq: mu star argmin}} \\
=& \textstyle \E_{a\sim\pi}[r(s,a)^2] -\E_{a\sim\pi}[r(s,a)]^2  \\
=& \V_{a \sim \pi}(r(s,a)) . 
\end{align}
\end{proof}
In Section~\ref{section: sequential}, we expand Lemma~\ref{lemma: convex bandits} and Lemma~\ref{lemma: optimal bandits} from contextual bandits to sequential reinforcement learning in Theorem \ref{theorem: convex rl} and Theorem \ref{theorem: optimal rl}. We show that with a recursive expression of the estimation variance, we can reduce the sequential problem into bandits in each time step $t$, and thereafter obtain the optimal behavior policy $\mu^*$ that minimizes variance under safety constraints.

\section{Constrained Variance Minimization for Sequential Reinforcement Learning}
\label{section: sequential}
In this section, we extend the techniques from contextual bandits to the sequential reinforcement learning setting. We seek to find an optimal behavior policy $\mu$ that reduces the variance $\V\left(\pdisg(\tau^{\mu_{0:T-1}}_{0:T-1})\right)$ under safety constraints. Before defining the optimization problem, we first define the policy space we search for the behavior policy to ensure the unbiasedness of the PDIS estimator. Conventional methods search $\mu$ in the set of all policies that cover the target policy $\pi$ \citep{sutton2018reinforcement}, i.e.,
\begin{align}
\Lambda_- \doteq 
\{& \mu \mid
\forall t, s, a, \mu_t(a|s) = 0 \Rightarrow\pi_t(a|s) = 0\wedge  \forall t,s, \mu_t(\cdot | s) \in \Delta^{|\fA| - 1}\}.
\end{align}
In this paper, similar to the bandits setting \eqref{eq: stats search space}, we search in an enlarged set $\Lambda$, which is defined as
\begin{align}
\Lambda \doteq& \{\mu \mid \forall t, s, a, \mu_t(a|s) = 0 \Rightarrow 
  \pi_t(a|s)q_{\pi, t}(s, a) = 0 \wedge   \forall t,s, \mu_t(\cdot | s) \in \Delta^{|\fA| - 1} \}.\label{eq: definition lambda rl}
\end{align}
The following lemma from \citet{liu2024efficient} ensures the unbiasedness of the off-policy estimator with the behavior policy $\mu \in \Lambda$.
\begin{lemma}
\label{lem rl pdis unbaised}
$\forall \mu \in \Lambda$, $\forall  t$, $\forall  s$, 
\begin{align}
    \E\left[\pdisg(\tau^{\mu_{t:T-1}}_{t:T-1}) \mid S_t = s\right] = v_{\pi, t}(s).
\end{align}
\end{lemma}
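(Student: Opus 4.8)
The plan is to prove the identity by \emph{backward induction} on the time step $t$, running from $t = T-1$ down to $t = 0$ and exploiting the recursive form of the PDIS estimator in \eqref{eq:PDIS-recursive}. The induction hypothesis at step $t+1$ is exactly the claimed equality $\E[\pdisg(\tau^{\mu_{t+1:T-1}}_{t+1:T-1}) \mid S_{t+1} = s'] = v_{\pi, t+1}(s')$ for every state $s'$, and the goal at step $t$ is to propagate this one step backward. This mirrors, in the sequential setting, the single-step argument behind Lemma~\ref{lemma: unbias bandits}.

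For the base case $t = T-1$, I would expand $\pdisg(\tau^{\mu_{T-1:T-1}}_{T-1:T-1}) = \rho_{T-1} R_T$ and compute the conditional expectation directly. Since $R_T = r(S_{T-1}, A_{T-1})$ and the action is drawn from $\mu_{T-1}(\cdot \mid s)$, the importance ratio cancels the behavior probability, leaving $\sum_{a : \mu_{T-1}(a|s) > 0} \pi_{T-1}(a|s) r(s,a)$. Because $q_{\pi, T-1}(s,a) = r(s,a)$ at the terminal step, the defining implication of $\Lambda$ in \eqref{eq: definition lambda rl} forces every omitted action (those with $\mu_{T-1}(a|s) = 0$) to satisfy $\pi_{T-1}(a|s) r(s,a) = 0$, so reinstating them leaves the sum unchanged and equal to $v_{\pi, T-1}(s)$. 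For the inductive step, I would substitute the recursion $\pdisg(\tau^{\mu_{t:T-1}}_{t:T-1}) = \rho_t\bigl(R_{t+1} + \pdisg(\tau^{\mu_{t+1:T-1}}_{t+1:T-1})\bigr)$, condition successively on $A_t = a$ and $S_{t+1} = s'$, and invoke the Markov property so that the tail estimator's conditional expectation given $S_{t+1} = s'$ reduces to $v_{\pi, t+1}(s')$ by the induction hypothesis. After the ratio cancels $\mu_t(a|s)$, the bracketed quantity $r(s,a) + \sum_{s'} p(s'|s,a) v_{\pi, t+1}(s')$ is precisely $q_{\pi, t}(s,a)$ by the Bellman equation, leaving $\sum_{a : \mu_t(a|s) > 0} \pi_t(a|s) q_{\pi, t}(s,a)$.

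The crux of the argument, and the sole place the enlarged space $\Lambda$ matters, is closing the gap between the restricted sum over $\{a : \mu_t(a|s) > 0\}$ and the full sum $\sum_a \pi_t(a|s) q_{\pi, t}(s,a) = v_{\pi, t}(s)$. Under the classical coverage set $\Lambda_-$ this gap is empty, but in $\Lambda$ there can be actions with $\pi_t(a|s) > 0$ yet $\mu_t(a|s) = 0$. The implication $\mu_t(a|s) = 0 \Rightarrow \pi_t(a|s) q_{\pi,t}(s,a) = 0$ guarantees that each such omitted action contributes nothing to $v_{\pi,t}(s)$, so the two sums coincide. I expect this bookkeeping over uncovered actions to be the main obstacle: one must check at every induction level that the terms silently discarded by $\mu$ are exactly the terms that vanish in the target value, and that the $0/0$ ambiguity in $\rho_t$ never actually arises, since such actions carry zero probability under $\mu$ and are therefore never summed.
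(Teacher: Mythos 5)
Your proposal is correct and follows essentially the same route as the paper: backward induction on $t$ using the recursive form \eqref{eq:PDIS-recursive}, the Markov property, and the defining implication of $\Lambda$ to reconcile the sum over $\{a: \mu_t(a|s)>0\}$ with the full sum defining $v_{\pi,t}(s)$. The only cosmetic difference is that the paper packages the final "reinstate the omitted actions" step as an invocation of Lemma~\ref{lemma: unbias bandits} (applied with $q_{\pi,t}$ in the role of the bandit reward), whereas you carry out that bookkeeping explicitly.
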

Its proof is in Appendix~\ref{appendix: rl unbias}. 
A natural idea to do variance minimization under safety constraints with a safety parameter $\epsilon \in [0, \infty)$ is to solve the following optimization problem
\begin{align}
\label{eq rl opt problem naive}
\min_{\mu \in \Lambda} \quad & \V\left(\pdisg(\tau^{\mu_{0:T-1}}_{0:T-1})\right), \\
\text{s.t.} \quad&  
J^c(\mu)\leq (1+\epsilon)J^c(\pi),
\end{align}
where $J^c(\mu)\doteq \sum_s p_0(s) v_{\mu, 0}^c(s)$ is the expected cost of the behavior policy $\mu$. Solving this problem directly is very challenging. When designing a policy at a time step $t$, we need to consider not only the immediate reward generated by this action but also the future consequences. \cite{hanna2017data} try to solve this problem without safety constraints by directly optimizing the behavior policy $\mu$ with gradient descent. However, this approach requires online data to optimize $\mu$ and struggles in even moderately complicated environments as shown in \cite{zhong2022robust} and \citet{liu2024efficient}. 

In this paper, we therefore propose to solve this problem in a backward way while ensuring safety constraints.
Given an $\epsilon$, 
use 
\begin{align}
\label{eq: constraint j}
\delta_{\epsilon, t}(s)\doteq (1+\epsilon)v^c_{\pi,t}(s)
\end{align}
to denote the safety threshold.
We define an extended reward function $\tilde{r}_t(s,a)$ and a behavior policy $\mu^*$. 
They are defined in the order of $ \qty{\tilde{r}_{T-1}, \mu^*_{T-1}, \tilde{r}_{T-2},\mu^*_{T-2}, \cdots , \tilde{r}_{0},\mu^*_{0}}$. Denote the variance of the state value for the next state
given the current state-action pair $(s,a)$ as $\nu_{\pi, t}(s, a)$.
We have
\begin{align}
\textstyle
\nu_{\pi, t}(s, a)
\doteq&
\begin{cases}
 0 & t = T-1,\\
\V_{S_{t+1}}\left(v_{\pi, t+1}(S_{t+1})\mid S_t=s, A_t=a\right)  & t \in [T-2].
\end{cases}
\label{def:nu}
\end{align}
Then, the extended reward function is defined as
\begin{align}
\textstyle
\tilde{r}_t(s,a) 
\doteq&
\begin{cases}
r_{\pi,t}(s,a)^2 & t = T-1,\\
\nu_{\pi,t}(s,a)
+ q_{\pi, t}(s,a)^2+\E_{S_{t+1}}\left[\V\left(\pdisg(\tau^{\mu^*_{t+1:T-1}}_{t+1:T-1})\mid S_{t+1}\right) \mid s,a\right]  & t \in [T-2].
\end{cases}
\label{eq: extended reward}
\end{align}
The behavior policy $\mu_t^*$ is defined as the optimal solution to the following problem. $\forall t,s$,
\begin{align}
\min_{\mu_t\in \Lambda} \quad& \E_{a\sim\mu_t}[\rho_t^2\tilde{r}_t(s,a)],\\
\text{s.t.} \quad&\E_{a\sim \mu_t}[q^c_{\mu,t}(s,a)]\leq \delta_{\epsilon, t}(s). 
\end{align}
We have the following theorem showing the convexity and feasibility of \eqref{eq: optimization object rl}, thus ensuring the existence of the behavior policy $\mu^*$.
\begin{theorem}
\label{theorem: convex rl}
$\forall \epsilon \geq 0$, $\forall t$, $\forall s$, the following optimization problem is convex and feasible.
\begin{align}
\min_{\mu_t\in \Lambda} \quad& \E_{a\sim\mu_t}[\rho_t^2\tilde{r}_t(s,a)],\label{eq: optimization object rl}\\
\text{s.t.} \quad&\E_{a\sim \mu_t}[q^c_{\mu,t}(s,a)]\leq \delta_{\epsilon, t}(s). \label{eq: safety constraint rl appendix}
\end{align}
\end{theorem}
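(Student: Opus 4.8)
The plan is to mirror the two-part bandits argument of Lemma~\ref{lemma: convex bandits} at each fixed time step $t$, treating the already-constructed future policy $\mu^*_{t+1:T-1}$ as frozen, and to stitch the time steps together by a backward induction on $t$. Since we condition on $S_t=s$ and optimize only over the single conditional distribution $\mu_t(\cdot\mid s)\in\Delta^{|\fA|-1}$, the program \eqref{eq: optimization object rl} is genuinely a one-step (bandit-type) problem in the $|\fA|$ variables $\{\mu_t(a\mid s)\}_a$, which is why the reduction to the contextual-bandit analysis is available.

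For convexity I would first rewrite the objective explicitly as $\E_{a\sim\mu_t}[\rho_t^2\tilde{r}_t(s,a)]=\sum_a \frac{\pi_t(a\mid s)^2\,\tilde{r}_t(s,a)}{\mu_t(a\mid s)}$. The decisive observation is that the extended reward is nonnegative, $\tilde{r}_t(s,a)\ge 0$: at $t=T-1$ it is a square, and for $t\in[T-2]$ it is the sum of the variance $\nu_{\pi,t}(s,a)\ge 0$, the square $q_{\pi,t}(s,a)^2\ge 0$, and an expectation of a (nonnegative) variance. Hence each summand has the form $c_a/\mu_t(a\mid s)$ with $c_a\ge 0$, which is convex on the positive orthant, so the objective is convex. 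For the constraint I would note that, once $A_t=a$ is conditioned on, $q^c_{\mu,t}(s,a)=c(s,a)+\E_{S_{t+1}}[v^c_{\mu,t+1}(S_{t+1})\mid s,a]$ depends only on the frozen future $\mu^*_{t+1:T-1}$ and not on $\mu_t$; therefore $\E_{a\sim\mu_t}[q^c_{\mu,t}(s,a)]=\sum_a\mu_t(a\mid s)q^c_{\mu,t}(s,a)$ is linear in the decision variable and \eqref{eq: safety constraint rl appendix} cuts out a halfspace. Intersecting with the simplex and the support constraints of $\Lambda$ (all convex) yields a convex feasible set, so the problem is convex.

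Feasibility is the part that genuinely uses the sequential structure, and I would prove it by backward induction on $t$, carrying the invariant that the constructed policy is safe, i.e. $v^c_{\mu^*,t}(s)\le(1+\epsilon)v^c_{\pi,t}(s)$ for every $s$. The exhibited feasible point at each step is the target policy $\pi_t$ itself, which lies in $\Lambda$ trivially. At $t=T-1$ there is no future and $\E_{a\sim\pi_{T-1}}[c(s,a)]=v^c_{\pi,T-1}(s)\le(1+\epsilon)v^c_{\pi,T-1}(s)$ since $\epsilon\ge 0$ and costs are nonnegative. For $t\in[T-2]$, using the induction hypothesis $v^c_{\mu^*,t+1}(S_{t+1})\le(1+\epsilon)v^c_{\pi,t+1}(S_{t+1})$ together with the cost Bellman identity $v^c_{\pi,t}(s)=\E_{a\sim\pi_t}[c(s,a)]+\E_{a\sim\pi_t}[\E_{S_{t+1}}[v^c_{\pi,t+1}(S_{t+1})\mid s,a]]$, a short computation gives $\E_{a\sim\pi_t}[q^c_{\mu,t}(s,a)]\le(1+\epsilon)v^c_{\pi,t}(s)-\epsilon\,\E_{a\sim\pi_t}[c(s,a)]\le(1+\epsilon)v^c_{\pi,t}(s)=\delta_{\epsilon,t}(s)$, so $\pi_t$ is feasible and the program is nonempty. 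Its optimal solution $\mu^*_t$ then also satisfies the constraint, which is exactly $v^c_{\mu^*,t}(s)\le(1+\epsilon)v^c_{\pi,t}(s)$, re-establishing the invariant at level $t$ and closing the induction.

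The main obstacle is precisely the recursive coupling through $q^c_{\mu,t}$: unlike the bandit case, exhibiting $\pi_t$ as a feasible point is not immediate, because the constraint at time $t$ inherits the cost of the future behavior policy $\mu^*_{t+1:T-1}$, which differs from $\pi$. The backward safety invariant is what tames this, guaranteeing that the inherited future cost never exceeds the $(1+\epsilon)$-inflated target cost, while the slack $-\epsilon\,\E_{a\sim\pi_t}[c(s,a)]$ produced by $\epsilon\ge 0$ and nonnegative costs absorbs the one-step discrepancy. I would also take care to treat the objective as an extended-real-valued convex function, with value $+\infty$ where a required action receives zero probability, so that the support condition defining $\Lambda$ remains compatible with convexity rather than breaking it.
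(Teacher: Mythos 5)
Your convexity argument is essentially identical to the paper's: you write the objective as $\sum_a \pi_t(a\mid s)^2\tilde{r}_t(s,a)/\mu_t(a\mid s)$, observe $\tilde{r}_t\ge 0$ term by term so each summand is convex on the positive orthant, and note that the support condition, the simplex, and the cost constraint (linear in $\mu_t(\cdot\mid s)$ because $q^c_{\mu,t}$ is determined by the frozen future policy) all cut out convex sets. No difference there.

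Where you genuinely diverge is feasibility, and your version is the more careful one. The paper disposes of feasibility in one line, asserting $\E_{a\sim\pi_t}[v^c_{\mu,t}(s)]\le(1+\epsilon)\E_{a\sim\pi_t}[v^c_{\mu,t}(s)]=\delta_{\epsilon,t}(s)$, which silently identifies $\E_{a\sim\pi_t}[q^c_{\mu,t}(s,a)]$ with $v^c_{\pi,t}(s)$; these differ in general because $q^c_{\mu,t}$ is the cost value under the already-constructed future policy $\mu^*_{t+1:T-1}$, not under $\pi_{t+1:T-1}$. You correctly isolate this recursive coupling as the real obstacle and resolve it with a backward induction carrying the invariant $v^c_{\mu^*,t+1}(\cdot)\le(1+\epsilon)v^c_{\pi,t+1}(\cdot)$, so that $\E_{a\sim\pi_t}[q^c_{\mu,t}(s,a)]\le(1+\epsilon)v^c_{\pi,t}(s)-\epsilon\,\E_{a\sim\pi_t}[c(s,a)]\le\delta_{\epsilon,t}(s)$, with the slack coming from $\epsilon\ge 0$ and $c\ge 0$; the constraint satisfied by $\mu^*_t$ then restores the invariant at level $t$. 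What your approach buys is a feasibility proof that actually closes over the time steps (and, as a by-product, re-derives the safety half of Theorem~\ref{theorem: optimal rl final}); what the paper's shorter argument buys is brevity at the cost of a step that, as written, does not follow from the stated definitions. Your extended-real-valued convention for the objective on the boundary of the simplex is also a sensible refinement the paper leaves implicit.
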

Its proof is in Appendix~\ref{appendix: convex rl}. We notice that the constrained optimization problem \eqref{eq: optimization object rl} is similar to \eqref{eq: optimization object bandits}, which is the optimization problem introduced in Section \ref{section: bandit}. 
In the contextual bandit setting \eqref{eq: optimization object bandits}, we optimize the objective with respect to the reward function $r$, ensuring variance reduction (Lemma~\ref{lemma: optimal bandits}).
In sequential reinforcement learning \eqref{eq: optimization object rl}, we optimize with respect to the extended reward function $\tilde{r}$, achieving variance reduction (Theorem~\ref{theorem: optimal rl} and \eqref{eq: trajectory variance}), while simultaneously guaranteeing safety \eqref{eq: trajectory safety}.
This observation provides a key insight: the step-wise optimization problem in \textit{sequential reinforcement learning} can be viewed as a reduced optimization problem in one-step \textit{contextual bandits}, where the reward is 
$\tilde{r}$. 
In Section \ref{section: learn behavior policy}, we further propose an efficient algorithm to learn 
$\tilde{r}$
without directly addressing the complicated trajectory variance $\V\left(\pdisg(\tau^{\mu_{t+1:T-1}}_{t+1:T-1})\mid S_{t+1}\right)$, making long-horizon RL problems more tractable.

\begin{restatable}{theorem}
{restatestepwiseoptimal}
\label{theorem: optimal rl}
The behavior policy $\mu^*$ reduces variance compared with the on-policy evaluation method. 
\begin{align}
\textstyle \forall t,s, 
\V\left(\pdisg(\tau^{\mu^*_{t:T-1}}_{t:T-1})\mid S_t=s\right)\leq \V\left(\pdisg(\tau^{\pi_{t:T-1}}_{t:T-1})\mid S_t=s\right).    
\end{align}
\end{restatable}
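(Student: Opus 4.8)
The plan is to prove the claim by backward induction on $t$, turning each time step into an instance of the one-step bandit result (Lemma~\ref{lemma: optimal bandits}) with the extended reward $\tilde r_t$ playing the role of the bandit reward. Throughout I write $G^{\mu}_t \doteq \pdisg(\tau^{\mu_{t:T-1}}_{t:T-1})$ for brevity. The inductive hypothesis will be that, for every state $s'$, $\V(G^{\mu^*}_{t+1}\mid S_{t+1}=s')\le \V(G^{\pi}_{t+1}\mid S_{t+1}=s')$, i.e. $\mu^*$ already beats on-policy from time $t+1$ onward; I then propagate this one step back to $t$.

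First I would establish the variance recursion that drives everything. Since $\E[G^{\mu}_t\mid S_t=s]=v_{\pi,t}(s)$ for every $\mu\in\Lambda$ (Lemma~\ref{lem rl pdis unbaised}), it suffices to track second moments, because $\V(G^{\mu}_t\mid S_t=s)=\E[(G^{\mu}_t)^2\mid S_t=s]-v_{\pi,t}(s)^2$. Using $G^{\mu}_t=\rho_t(R_{t+1}+G^{\mu}_{t+1})$ and conditioning successively on $A_t$ and then on $S_{t+1}$, together with the Bellman identity $\E[v_{\pi,t+1}(S_{t+1})\mid s,a]=q_{\pi,t}(s,a)-r(s,a)$ and the definition of $\nu_{\pi,t}$ in \eqref{def:nu}, the reward cross-terms cancel and I expect to reach
\begin{align}
\E[(G^{\mu}_t)^2\mid S_t=s]=\E_{a\sim\mu_t}\Big[\rho_t^2\big(\nu_{\pi,t}(s,a)+q_{\pi,t}(s,a)^2+\E_{S_{t+1}}[\V(G^{\mu}_{t+1}\mid S_{t+1})\mid s,a]\big)\Big].
\end{align}
The key observation is that for $\mu=\mu^*$ the bracket is exactly the extended reward $\tilde r_t(s,a)$ of \eqref{eq: extended reward}, so $\V(G^{\mu^*}_t\mid S_t=s)=\E_{a\sim\mu^*_t}[\rho_t^2\tilde r_t(s,a)]-v_{\pi,t}(s)^2$, which is precisely the objective \eqref{eq: optimization object rl} that $\mu^*_t$ minimizes; while for $\mu=\pi$ (where $\rho_t\equiv 1$) the same bracket equals $\tilde r^{\pi}_t(s,a)\doteq\nu_{\pi,t}(s,a)+q_{\pi,t}(s,a)^2+\E_{S_{t+1}}[\V(G^{\pi}_{t+1}\mid S_{t+1})\mid s,a]$, giving $\V(G^{\pi}_t\mid S_t=s)=\E_{a\sim\pi}[\tilde r^{\pi}_t(s,a)]-v_{\pi,t}(s)^2$.

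With the recursion in hand the induction is short. The base case $t=T-1$ has $\tilde r_{T-1}(s,a)=r(s,a)^2$ and reduces verbatim to the bandit statement, so Lemma~\ref{lemma: optimal bandits} applies directly. For the inductive step, the hypothesis $\V(G^{\mu^*}_{t+1}\mid\cdot)\le\V(G^{\pi}_{t+1}\mid\cdot)$ gives $\tilde r_t(s,a)\le\tilde r^{\pi}_t(s,a)$ for every $(s,a)$, since the two extended rewards differ only in their next-state variance term. Because $\pi$ is feasible for the step-$t$ problem while $\mu^*_t$ is its minimizer (Theorem~\ref{theorem: convex rl}), and $\rho_t\equiv 1$ when the objective is evaluated at $\pi$, I chain
\begin{align}
\E_{a\sim\mu^*_t}[\rho_t^2\tilde r_t(s,a)]\le \E_{a\sim\pi}[\tilde r_t(s,a)]\le \E_{a\sim\pi}[\tilde r^{\pi}_t(s,a)],
\end{align}
and subtracting the common constant $v_{\pi,t}(s)^2$ from both ends yields $\V(G^{\mu^*}_t\mid S_t=s)\le\V(G^{\pi}_t\mid S_t=s)$, closing the induction.

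The main obstacle is not the algebra of the recursion but justifying the feasibility of $\pi$ used in the chaining step. The constraint in \eqref{eq: safety constraint rl appendix} is written with $q^c_{\mu,t}$, the cost-to-go under $\mu^*$'s already-fixed future, so the feasibility of $\pi$, namely $\E_{a\sim\pi_t}[q^c_{\mu^*,t}(s,a)]\le(1+\epsilon)v^c_{\pi,t}(s)$, is not automatic; it relies on a safety invariant $v^c_{\mu^*,t+1}(s')\le(1+\epsilon)v^c_{\pi,t+1}(s')$ carried backward, via $q^c_{\mu^*,t}(s,a)=c(s,a)+\E_{S_{t+1}}[v^c_{\mu^*,t+1}(S_{t+1})\mid s,a]\le(1+\epsilon)q^c_{\pi,t}(s,a)$ together with $c\ge 0$. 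This invariant is exactly what the feasibility half of Theorem~\ref{theorem: convex rl} secures, so I would invoke it rather than re-derive it, which keeps the variance argument and the safety argument cleanly separated.
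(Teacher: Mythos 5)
Your proof is correct and follows essentially the same route as the paper: the law-of-total-variance recursion for the second moment, the succinct form $\V(\cdot\mid S_t=s)=\E_{a\sim\mu}[\rho_t^2\tilde r_t(s,a)]-v_{\pi,t}(s)^2$, and the comparison of the minimizer $\mu^*_t$ against the feasible point $\pi_t$. Your explicit backward induction with the on-policy analogue $\tilde r^{\pi}_t$ is in fact slightly more careful than the paper's write-up, whose final equality $\E_{a\sim\pi_t}[\rho_t^2\tilde r_t(s,a)]-v_{\pi,t}(s)^2=\V(\pdisg(\tau^{\pi_{t:T-1}}_{t:T-1})\mid S_t=s)$ silently compares a hybrid policy ($\pi$ at step $t$, $\mu^*$ thereafter) to the fully on-policy trajectory and therefore implicitly relies on exactly the inductive step you spell out.
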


Its proof is in Appendix~\ref{appendix: optimal rl}.
We also present the following theorem to demonstrate variance reduction and safety guarantee with respect to the original constrained optimization problem \eqref{eq rl opt problem naive}.
\begin{theorem}
\label{theorem: optimal rl final}
For all $\epsilon \geq 0$, the corresponding behavior policy $\mu^*$ has the following property
\begin{align}
&\text{1.}\quad\textstyle 
\V\left(\pdisg(\tau^{{\mu}^*_{0:T-1}}_{0:T-1})\right) \leq \V\left(\pdisg(\tau^{\pi_{0:T-1}}_{0:T-1})\right)\label{eq: trajectory variance}\\
&\text{2.}\quad J^c(\mu^*)\leq (1+\epsilon)J^{c}(\pi)\label{eq: trajectory safety}
\end{align}
\end{theorem}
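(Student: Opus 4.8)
The plan is to derive both conclusions from the per-state, per-step results already in hand rather than attacking the trajectory-level quantities directly.

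For the variance bound \eqref{eq: trajectory variance}, I would first bridge the gap between the \emph{conditional} statement of Theorem~\ref{theorem: optimal rl} and the \emph{unconditional} trajectory variance by applying the law of total variance over the initial state $S_0 \sim p_0$. For each of the two policies $\mu^*$ and $\pi$, the trajectory variance splits as $\E_{S_0}[\V(\pdisg(\cdot)\mid S_0)] + \V_{S_0}(\E[\pdisg(\cdot)\mid S_0])$. The key observation is that the second term is the same for both policies: since $\mu^* \in \Lambda$ by construction, the unbiasedness Lemma~\ref{lem rl pdis unbaised} makes the inner conditional expectation equal to $v_{\pi,0}(S_0)$ whether the trajectory is generated by $\mu^*$ or by $\pi$, so the outer variance is $\V_{S_0}(v_{\pi,0}(S_0))$ in both cases. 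Subtracting the two decompositions cancels this common term and leaves
\[
\V\!\left(\pdisg(\tau^{\mu^*_{0:T-1}}_{0:T-1})\right) - \V\!\left(\pdisg(\tau^{\pi_{0:T-1}}_{0:T-1})\right)
= \E_{S_0}\!\left[\V\!\left(\pdisg(\tau^{\mu^*_{0:T-1}}_{0:T-1})\mid S_0\right) - \V\!\left(\pdisg(\tau^{\pi_{0:T-1}}_{0:T-1})\mid S_0\right)\right].
\]
Applying Theorem~\ref{theorem: optimal rl} at $t=0$ makes the integrand nonpositive for every state, so the expectation is nonpositive, which is exactly \eqref{eq: trajectory variance}.

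For the safety claim \eqref{eq: trajectory safety}, I would argue directly from the step-wise constraint \eqref{eq: safety constraint rl appendix}. The central identity is that the cost state-value decomposes over the current action as $v^c_{\mu^*,t}(s) = \E_{a\sim\mu^*_t}[q^c_{\mu^*,t}(s,a)]$, since conditioning the cost return $G^c_t$ on $A_t = a$ yields precisely $q^c_{\mu^*,t}(s,a)$, a quantity that depends only on the already-fixed future policy $\mu^*_{t+1:T-1}$ and not on $\mu^*_t$. Because $\mu^*_t$ is feasible for its own constraint with threshold $\delta_{\epsilon,t}(s) = (1+\epsilon)v^c_{\pi,t}(s)$, this identity gives $v^c_{\mu^*,t}(s) \le (1+\epsilon)v^c_{\pi,t}(s)$ for every $t$ and $s$, with no further induction over time steps required. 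Specializing to $t=0$ and averaging against $p_0$ yields $J^c(\mu^*) = \sum_s p_0(s)v^c_{\mu^*,0}(s) \le (1+\epsilon)\sum_s p_0(s)v^c_{\pi,0}(s) = (1+\epsilon)J^c(\pi)$.

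I expect the only genuine obstacle to be in the first part: one must recognize that the conditional bound of Theorem~\ref{theorem: optimal rl} does not immediately give the unconditional trajectory statement, and that the discrepancy is exactly the outer-variance term, which cancels \emph{only} because the PDIS estimator is unbiased under both the designed behavior policy and the target policy. Once this cancellation is identified, the rest is bookkeeping. The safety part is essentially immediate, the sole point requiring care being the recognition that the constraint is phrased in terms of $q^c_{\mu,t}$ evaluated at the backward-constructed future policy, so that its $\mu^*_t$-average is literally $v^c_{\mu^*,t}$.
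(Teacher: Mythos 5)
Your proposal is correct and matches the paper's own proof essentially step for step: part 1 uses the same law-of-total-variance decomposition over $S_0$, the same cancellation of $\V_{S_0}(v_{\pi,0}(S_0))$ via Lemma~\ref{lem rl pdis unbaised}, and the same appeal to Theorem~\ref{theorem: optimal rl}; part 2 uses the same identity $v^c_{\mu^*,0}(s)=\E_{a\sim\mu^*_0}[q^c_{\mu^*,0}(s,a)]\le\delta_{\epsilon,0}(s)$ followed by averaging over $p_0$. No gaps.
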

Its proof is in Appendix~\ref{appendix: optimal rl final}. Notably, \eqref{eq: trajectory safety} shows that our step-wise safety-constraint \eqref{eq: safety constraint rl appendix} 
is stricter than the original constraint \eqref{eq rl opt problem naive}.

\begin{algorithm}[ht]
\caption{Safety-Constrained Optimal Policy Evaluation (SCOPE)}
\label{alg: safe algorithm}
\begin{algorithmic}[1]
\STATE {\bfseries Input:} 
a target policy $\pi$, \\
\hspace{1cm} an offline dataset $\mathcal{D} = \qty{(t_i,s_i,a_i,r_i, c_i, s'_i)}_{i=1}^m$
\STATE {\bfseries Output:} a behavior policy $\mu^*$
\STATE Approximate $q_{\pi,t}, q^c_{\pi,t}$ from $\mathcal{D}$ 
\FOR {$t = T-1$ to $0$}
\STATE Approximate $\tilde{r}_{t}$ from $\mathcal{D}$  by Lemma~\ref{lemma: recursive tilde r}
\STATE Approximate $\mu^*_t(a|s)$ following \eqref{eq: optimization object rl}
\ENDFOR
\STATE \textbf{Return:} 
the approximated behavior policy $\mu^*$
\end{algorithmic}
\end{algorithm}

\section{Learning the Optimal Behavior Policy}
\label{section: learn behavior policy}

In this section, we propose an efficient algorithm to learn $\tilde{r}$ with previously logged offline data, and subsequently derive the optimal behavior policy $\mu^*$ under safety constraints.
We notice that learning $\tilde{r}$ by \eqref{eq: extended reward} is inefficient since we need to approximate the complicated variance $\V\left(\pdisg(\tau^{\mu_{t+1:T-1}}_{t+1:T-1})\mid S_t\right) $, which involves the entire future trajectory. To tackle this challenge, we present a recursive expression of $\tilde{r}$ in the following lemma.

\begin{restatable}[]{lemma}{reOOlemmaOOrecursiveOOtildeOOr}
\label{lemma: recursive tilde r}
$\forall s,a$, when $t=T-1$, $\tilde{r}_t(s,a)=r_{\pi,t}(s,a)^2$. When $t\in [T-2]$,
\begin{align}
\textstyle\tilde{r}_t(s,a)=2q_{\pi,t}(s,a)r(s,a)- r(s,a)^2+\E_{s' \sim p, a' \sim \pi}\qty[\frac{\pi_{t+1}(a'|s')}{\mu^*_{t+1}(a'|s')}\tilde{r}_{\pi,t+1}(s',a')] . 
\end{align}
\end{restatable}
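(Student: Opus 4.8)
The plan is to prove the recursion by reducing the hard conditional-variance term in the original definition \eqref{eq: extended reward} of $\tilde r_t$ to an importance-weighted expectation of $\tilde r_{t+1}$. The base case $t=T-1$ is immediate, since $\tilde r_{T-1}(s,a)=r_{\pi,T-1}(s,a)^2$ by definition. Throughout I abbreviate $\pdisg_t \doteq \pdisg(\tau^{\mu^*_{t:T-1}}_{t:T-1})$ and write $M_t(s) \doteq \E[\pdisg_t^2 \mid S_t=s]$ for the second moment of the PDIS estimator under $\mu^*$, so that by Lemma~\ref{lem rl pdis unbaised} we have $M_t(s) = \V(\pdisg_t\mid S_t=s) + v_{\pi,t}(s)^2$.

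The engine of the proof is the auxiliary ``bandit-reduction'' identity
\[
\E_{a\sim\mu^*_t}[\rho_t^2\,\tilde r_t(s,a)] = M_t(s), \qquad \forall t,\, s,
\]
which is exactly the fact that makes the step-wise RL objective \eqref{eq: optimization object rl} a one-step contextual-bandit objective with reward $\tilde r_t$ (this is established in the development of Theorem~\ref{theorem: optimal rl}; if it is not already available I would prove it by backward induction). For the inductive step I would expand $M_t$ through the recursive form \eqref{eq:PDIS-recursive} of the estimator, $\pdisg_t = \rho_t(R_{t+1}+\pdisg_{t+1})$, condition first on $A_t=a$ and then on $S_{t+1}=s'$, use $R_{t+1}=r(s,a)$, and evaluate $\E[\pdisg_{t+1}\mid S_{t+1}=s']=v_{\pi,t+1}(s')$ by Lemma~\ref{lem rl pdis unbaised} and $\E[\pdisg_{t+1}^2\mid S_{t+1}=s']=M_{t+1}(s')$ by the inductive hypothesis. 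Combined with $\E_{s'\sim p(\cdot\mid s,a)}[v_{\pi,t+1}(s')]=q_{\pi,t}(s,a)-r(s,a)$ and the definition \eqref{def:nu} of $\nu_{\pi,t}$, this matches the original definition of $\tilde r_t$.

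Given the auxiliary identity, the recursion follows quickly. First I rewrite the importance weight: for $a'\sim\pi_{t+1}$,
\[
\E_{a'\sim\pi_{t+1}}\!\left[\tfrac{\pi_{t+1}(a'\mid s')}{\mu^*_{t+1}(a'\mid s')}\tilde r_{t+1}(s',a')\right] = \E_{a'\sim\mu^*_{t+1}}[\rho_{t+1}^2\,\tilde r_{t+1}(s',a')] = M_{t+1}(s'),
\]
since both middle expressions equal $\sum_{a'}\tfrac{\pi_{t+1}(a'\mid s')^2}{\mu^*_{t+1}(a'\mid s')}\tilde r_{t+1}(s',a')$ and the last equality is the auxiliary identity at $t+1$. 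Hence the last term of the claimed recursion equals $\E_{s'\sim p(\cdot\mid s,a)}[M_{t+1}(s')] = \E_{s'}[V_{t+1}(s')] + \E_{s'}[v_{\pi,t+1}(s')^2]$, where $V_{t+1}(s')\doteq\V(\pdisg_{t+1}\mid S_{t+1}=s')$. It then remains to verify the purely algebraic identity
\[
\nu_{\pi,t}(s,a) + q_{\pi,t}(s,a)^2 = 2q_{\pi,t}(s,a)r(s,a) - r(s,a)^2 + \E_{s'\sim p(\cdot\mid s,a)}[v_{\pi,t+1}(s')^2],
\]
which I obtain by substituting $\nu_{\pi,t}(s,a)=\E_{s'}[v_{\pi,t+1}(s')^2]-(q_{\pi,t}(s,a)-r(s,a))^2$ and expanding the square; comparison with the original definition \eqref{eq: extended reward} then closes the argument.

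The main obstacle is the auxiliary second-moment identity $\E_{a\sim\mu^*_t}[\rho_t^2\tilde r_t]=M_t$; the remainder is bookkeeping that collapses cleanly. Within that identity two points need care: the ratios $\pi/\mu^*$ must be read with the $\Lambda$-convention of \eqref{eq: definition lambda rl}, so that actions with $\mu^*_t(a\mid s)=0$ contribute nothing and no division-by-zero arises (paralleling the treatment in Lemma~\ref{lem rl pdis unbaised}); and one must keep the deterministic immediate reward $R_{t+1}=r(s,a)$ separate from the random future return when expanding the square, since it is precisely the resulting cross term that produces $2q_{\pi,t}(s,a)r(s,a)$ together with the $-r(s,a)^2$ correction in the final recursion.
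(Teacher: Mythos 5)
Your proof is correct and follows essentially the same route as the paper's: both reduce the conditional-variance term in the definition of $\tilde r_t$ to $\E_{a'\sim\mu^*_{t+1}}[\rho_{t+1}^2\tilde r_{t+1}(s',a')]-v_{\pi,t+1}(s')^2$ (the paper via Lemma~\ref{lem:recursive-var}, you via the equivalent second-moment identity \eqref{eq: variance r - v}), and then close with the expansion $\nu_{\pi,t}(s,a)+q_{\pi,t}(s,a)^2=\E_{s'}[v_{\pi,t+1}(s')^2]+2q_{\pi,t}(s,a)r(s,a)-r(s,a)^2$. The only difference is cosmetic---you verify the claimed recursion by evaluating its right-hand side, whereas the paper unfolds the definition of $\tilde r_t$ forward.
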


Its proof is in Appendix~\ref{appendix: recursive tilde r}. With this lemma, we can learn $\tilde{r}$ recursively without approximating the complicated trajectory variance. Then, by \eqref{eq: variance r - v} in the appendix, we can 
also decompose the widely interested variance target in a succinct form
\begin{align}
\underbrace{\V\left(\pdisg(\tau^{\mu^*_{t:T-1}}_{t:T-1})\mid S_t=s\right)}_{\hypertarget{a}{(a)}} =  \underbrace{\E_{a\sim\mu}[\rho_t^2\tilde{r}_t(s,a)]}_{\hypertarget{b}{(b)}} - \underbrace{v_{\pi,t}(s)^2}_{\hypertarget{c}{(c)}}, \quad \forall s,t. \label{eq: variance decompose main}
\end{align}
This succinct form offers a way to approximate the complicated 
trajectory 
variance term \hyperlink{a}{(a)} from \hyperlink{b}{(b)} and \hyperlink{c}{(c)}, which do not contain any variance term themselves.
This is a surprising result because previously the best simplification of the variance for off-policy estimator \hyperlink{a}{(a)} still depends on state-value variance terms \citep{jiang2015doubly, liu2024efficient}.
With \eqref{eq: variance decompose main}, we can approximate the variance of the off-policy estimator in a model-free way with only segmented offline data.

For broad applicability, we adopt the behavior policy-agnostic offline learning setting \citep{nachum2019dualdice}, where the offline data has $m$ previously logged data tuples in the form of $
\qty{(t_i, s_i, a_i, r_i, c_i, s_i')}_{i=1}^m$. These data tuples can be generated by one or more possibly unknown behavior policies, and they are not required to form a complete trajectory. In the $i$-th data tuple, $t_i$ is the time step, $s_i$ is the state at time step $t_i$, $a_i$ is the action taken, $r_i$ is the observed reward, $c_i$ is the observed cost, and $s_i'$ is the successor state. 
In this paper, we learn $\tilde{r}$ from previously logged offline data.
Previously logged offline data are cheap and readily available compared with online data. This makes them a great engine for improving policy evaluation in the online phase.
Compared with gradient-based methods \citep{hanna2017data, zhong2022robust} which need complete online trajectories, our method does not require a long online warm-up time to find a good behavior policy because we are able to utilize offline data.
Subsequently, the optimal variance-reducing behavior policy $\mu^*$ under safety constraints is approximated through standard convex optimization solvers \citep{nocedal1999numerical, agrawal2018rewriting}.

\section{Empirical Results}\label{sec: experiment}
In this section, we demonstrate the empirical results comparing our methods against three baselines: \tb{(1)} the on-policy Monte Carlo estimator, 
\tb{(2)} the robust on-policy sampling estimator (ROS, \citet{zhong2022robust}), and 
\tb{(3)} the offline data informed estimator (ODI, \citet{liu2024efficient}). To ensure our method attains lower cost and is thus even safer than the on-policy estimator, we choose $\epsilon=0$ in the threshold $\delta_{\epsilon,t}$ \eqref{eq: constraint j}.
All methods 
learn their required parameters from the same offline dataset to ensure fair comparisons.
Given previously logged offline data, our method learns the optimal behavior policy 
under safety constraints 
using Algorithm~\ref{alg: safe algorithm}.

We name our algorithm Safety-Constrained Optimal Policy Evaluation (SCOPE) to emphasize that safety constraints are inherently considered in the design of the variance-minimizing behavior policy, unlike previous methods that overlook safety concerns. A metaphor for SCOPE is that it builds a bridge focused on efficient transportation (evaluation efficiency) while simultaneously ensuring traffic safety (satisfying safety constraints).

\begin{figure}[t]
\begin{minipage}{0.49\textwidth}
\centering
\includegraphics[width=1\textwidth]{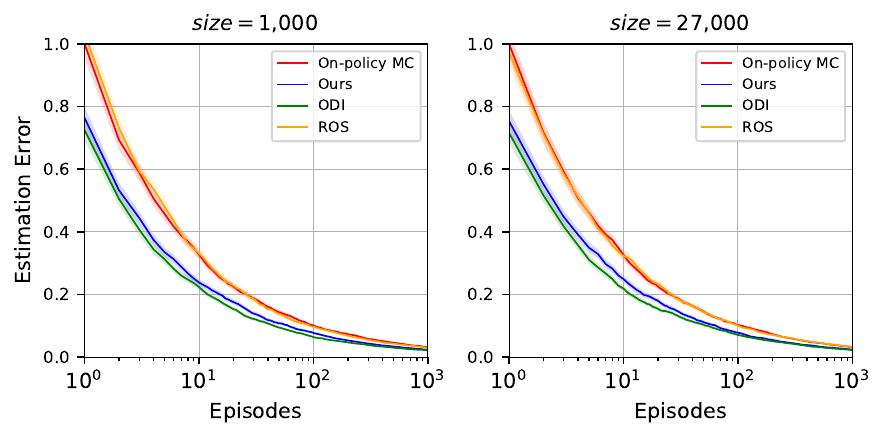}
\caption{Results on Gridworld with \textit{episodes} as x-axis. 
Each curve is averaged over 900 runs (30 target policies, each having 30 independent runs).
Shaded regions denote standard errors and are invisible for some curves as they are too small.
}
\label{fig:gridworld}
\end{minipage}
\hfill
\begin{minipage}{0.49\textwidth}
\centering
\includegraphics[width=1\textwidth]{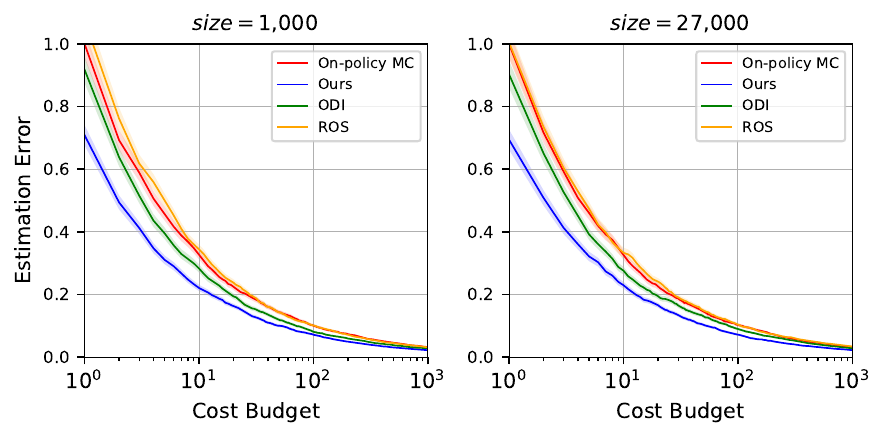}
\caption{Results on Gridworld with \textit{cost budget} as x-axis.  \textit{Cost budget} is
the total cost of execution.
Each curve is averaged over 900 runs (30 target policies, each having 30 independent runs).
Shaded regions denote standard errors.
}
\label{fig:gridworld_cost}
\end{minipage}
\end{figure}

\begin{table}[t]
\centering
\begin{tabular}{llllllll}
\toprule
Env Size& On-policy MC & Ours   & ODI & ROS \\
\midrule
1,000 & 1.000 & \textbf{0.861} & 1.602 & 1.083 \\
27,000 & 1.000 & \textbf{0.849} & 1.590 & 1.067 \\
\bottomrule
\end{tabular}
\caption{Average trajectory cost on Gridworld. Numbers are normalized by the cost of the on-policy estimator. ODI and ROS have much larger costs because they both ignore safety constraints. \textbf{Our method is the only method achieving both variance reduction and constraint satisfaction.}
}
\label{table: gridworld cost}
\end{table}

\textbf{Gridworld:} 
We first conduct experiments in Gridworld  with $n^3$ states. Each Gridworld is an $n \times n$ grid with the time horizon also being $n$. Gridworld environments offer a great tool to test algorithm scalability, because the number of states scales cubically with $n$.  Gridworld in our experiments have $n^3 = 1,000$ and $n^3 = 27,000$ number of states, which are the largest Gridworld environments tested among related works \citep{zhong2022robust, liu2024efficient}.
We test all methods on target policies with various performances.
The offline data is generated by many different policies to simulate previously logged offline data.
In Figure~\ref{fig:gridworld}, we report the estimation error against episodes. 
The estimation error for any line is the absolute error normalized by the absolute error of the on-policy estimator after the first episode. 
Thus, the estimation error of the on-policy estimator starts from $1$. In Figure~\ref{fig:gridworld_cost}, we report the estimation error against 
the total cost of execution.

If considering \textit{solely} variance reduction, Figure~\ref{fig:gridworld} shows our method outperforms the on-policy estimator and ROS by a large margin. 
Admittedly, ODI \citep{liu2024efficient} is slightly better than our method in terms of variance reduction. However, this slight advantage comes with a huge \textit{trade-off} of safety. As shown in Table \ref{table: gridworld cost}, ODI has a much larger cost than on-policy evaluation method (more than $1.5$ times) and our method (almost twice as much). \textbf{This addresses the underestimated fact---solely reducing variance without safety constraints leads to high-cost (unsafe) methods.}

To further demonstrate the superiority of balancing variance reduction and safety cost of our method, we provide Figure~\ref{fig:gridworld_cost} to compare the variance reduction each method achieves with the same cost budget. Since our method SCOPE is optimal for safety-constrained variance minimization, it consistently outperforms all baselines in Figure~\ref{fig:gridworld_cost}, as shown by the lowest blue line. This means that compared with existing best-performing methods, SCOPE needs less cost to achieve the same level of accuracy. 
From Figure~\ref{fig:gridworld_cost}, we compute that to achieve the same accuracy that the on-policy estimator achieves with $1000$ costs (each on-policy episode has expected cost $1$ by normalization), ODI costs $880$ and SCOPE costs only $425$. Following this computation, our method saves $57.5\%$ of costs compared to the on-policy method, and $50\%$ compared to ODI.
\textbf{This reinforces the underestimated fact from the opposite direction---ensuring safety constraints along with the variance minimization leads to a low-cost method.} Also, notably, our estimator outperforms the on-policy and ROS estimators in \textit{reducing both variance and cost}.

\textbf{MuJoCo:} 
Next, we conduct experiments in MuJoCo robot simulation tasks \citep{todorov2012mujoco}.
MuJoCo is a physics engine with a variety of stochastic environments The goal is to control a robot to achieve different behaviors such as walking, jumping, and balancing.

As confirmed in Table \ref{table: mujoco variance} and Table \ref{table: mujoco single cost} in the appendix, our method is the only method consistently achieving both variance reduction and safety constraint satisfaction.
Figure \ref{fig:mujoco} again
indicates that our method consistently outperforms all baselines on reducing variance under the same cost budget. 
This advantage is observed across all five environments, demonstrating the stableness of our method in balancing variance reduction and cost management.
Numerically, in Table~\ref{table: compare num}, we show that our method, SCOPE, saves up to $57.5\%$ cost to achieve the desired evaluation accuracy.
More experiment details are in Appendix~\ref{append: experiment}. It is worth mentioning that our method is robust to hyperparameter choices---all hyperparameters in our method are tuned offline and stay the same across all environments.

\begin{figure}[t]
\includegraphics[width=1\textwidth]{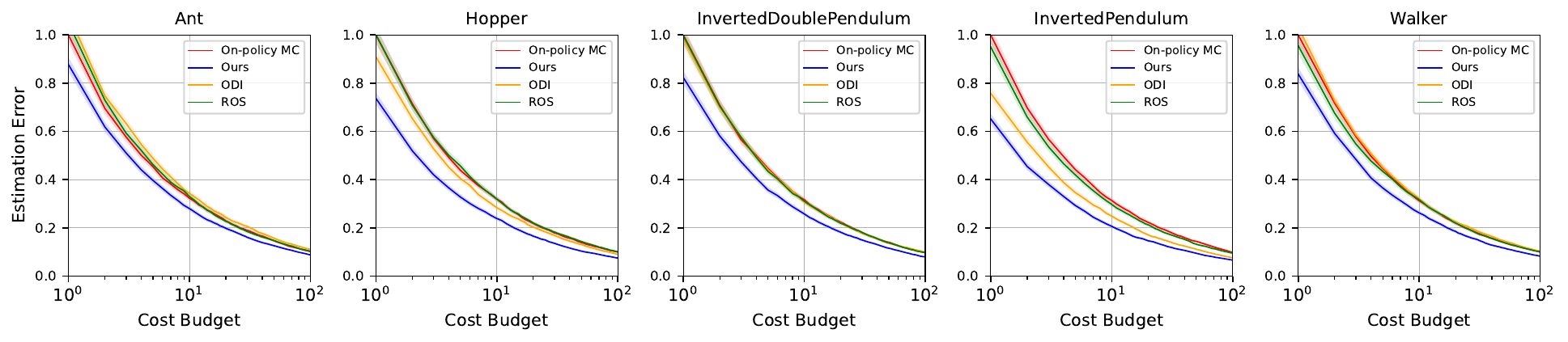}
\centering
\caption{
Results on MuJoCo. \textit{Cost budget} on the x-axis is
the total cost of execution.
Each curve is averaged over 900 runs (30 of target policies, each having 30 independent runs). 
Shaded regions denote standard errors and are invisible for some curves because they are too small. Results with a larger x-axis range are in the appendix. 
}
\label{fig:mujoco}
\end{figure}

\begin{table}
\begin{center}
\begin{small}
\begin{tabular}{llllll}
\toprule
 & On-policy MC & \textbf{Ours} & ODI & ROS & Saved Cost Percentage \\
\midrule
Ant & 1000 & \textbf{746} & 1136 & 1063 & (1000 - 746)/1000 = \textbf{25.4\%} \\
Hopper & 1000 & \textbf{552} & 824 & 1026 & (1000 - 552)/1000 = \textbf{44.8\%} \\
I. D. Pendulum & 1000 & \textbf{681} & 1014 & 1003 & (1000 - 681)/1000 = \textbf{31.9\%} \\
I. Pendulum & 1000 & \textbf{425} & 615 & 890 & (1000 - 425)/1000 = \textbf{57.5\%} \\
Walker & 1000 & \textbf{694} & 1031 & 960 & (1000 - 694)/1000 = \textbf{30.6\%} \\
\bottomrule
\end{tabular}
\end{small}
\end{center}
\caption{Cost needed to achieve the same estimation accuracy that on-policy Monte Carlo achieves with $1000$ episodes on MuJoCo. Each curve is averaged over 900 runs. Standard errors are plotted in Figure \ref{fig:mujoco}.
}
\label{table: compare num}
\end{table}

\section{Conclusion}
In reinforcement learning, due to the sequential nature, policy evaluation often suffers from large variance and thus requires massive data to achieve the desired level of accuracy. In addition, safety is a critical concern for policy execution, since unsafe actions can lead to significant risks and irreversible damage. In this paper, we address these two challenges simultaneously: we propose an optimal variance-minimizing behavior policy under safety constraints.

Theoretically, we show that our estimate is unbiased. Moreover, while simultaneously satisfying safety constraints, our behavior policy is proven to achieve lower variance than the classic on-policy evaluation method (Theorem~\ref{theorem: optimal rl}, Theorem~\ref{theorem: optimal rl final}).
We solve the constrained optimization problem without approximating the complicated trajectory variance (Lemma~\ref{lemma: recursive tilde r}), pointing out a promising direction for addressing long-horizon sequential reinforcement learning challenges.

Empirically, compared with existing best-performing methods, we show our method is the only one that achieves both substantial variance reduction and constraint satisfaction for policy evaluation in sequential reinforcement learning. Moreover, it is even superior to previous methods in both variance reduction and execution safety.

Admittedly, as there is no free lunch, if the offline data size is too small—perhaps containing merely a single data tuple—the learned behavior policy in our method may be inaccurate. In this case, for a safe backup, we recommend the on-policy evaluation method. The future work of our paper is to extend the constrained variance minimization technique to temporal difference learning.

\section*{Acknowledgements}
This work is supported in part by the US National Science Foundation (NSF) under grants III-2128019 and SLES-2331904.

\bibliographystyle{apalike}
\bibliography{bibliography}

\newpage
\appendix
\onecolumn

\section{Proofs}

\subsection{Proof of Lemma~\ref{lemma: unbias bandits}}\label{appendix: unbias bandits}
\begin{proof}
$\forall s$, $\forall\mu\in\Lambda$,
  \begin{align}
\E_{a\sim \mu}[\rho(a|s)r(s,a)]=&\sum_{a\in \{a|\mu(a|s)>0\}}\mu(a|s)\frac{\pi(a|s)}{\mu(a|s)}r(s,a)\\
=&\sum_{a\in \{a|\mu(a|s)>0\}}\pi(a|s)r(s,a)\\
=&\sum_{a\in \{a|\mu(a|s)>0\}}\pi(a|s)r(s,a)+\sum_{a\in \{a|\mu(a|s)=0\}}\pi(a|s)r(s,a) \explain{$\mu\in \Lambda$}\\
=&\sum_{a}\pi(a|s)r(s,a)\\
=&\E_{a\sim \pi}[r(s,a)].
    \end{align}    
\end{proof}
\subsection{Proof of Lemma~\ref{lemma: convex bandits}}\label{appendix: convex bandits}
\begin{proof}
To prove Lemma~\ref{lemma: convex bandits}, we express the objective function as
\begin{align}
\E_{a\sim\mu}[\rho(a|s)^2r(s,a)^2]=\sum_{a\in\{a|\mu(a|s)>0\}}\frac{\pi(a|s)^2r(s,a)^2}{\mu(a|s)}.
\end{align}
To prove the problem is convex, we begin by examining the feasible set of each constraint separately. 

In the first constraint of  $\Lambda$ \eqref{eq: stats search space}, 
\begin{align}
    \forall s,a,\mu(a|s)=0\implies \pi(a|s)r(s,a)=0.\label{eq: bandits coverage constraint}
\end{align}
The feasible set of \eqref{eq: bandits coverage constraint} is a linear subspace of $\mathbb{R}^{|\mathcal{A}|}$ defined by a set of linear equations. Thus, this feasible set is convex.

Next, we decompose the other constraint of $\Lambda$ \eqref{eq: stats search space}, $\mu(\cdot|s)\in \Delta^{|\fA| - 1}$ $\forall s$, into two subconstraints:
\begin{align}
\sum_a\mu(a|s)=1,\label{eq: sum 1 bandits appendix}\\
\forall a, \mu(a|s)\geq 0. \label{eq: nonnegative bandits appendix}
\end{align}
For all $s$, the feasible set in \eqref{eq: sum 1 bandits appendix} can be written in the vector form as
\begin{align}
\mathbf{1}^T\overrightarrow{\mu_s}=1,    \label{eq: sum1 appendix linear}
\end{align}
where $\mathbf{1}\in \mathbb{R}^{|\mathcal{A|}}$ is the vector of ones defined as 
\begin{align}
\mathbf{1}\doteq
\begin{bmatrix}
1 \\
\vdots\\
1
\end{bmatrix},
\end{align}
and $\overrightarrow{\mu_s}\in \mathbb{R}^{|\mathcal{A}|}$ is defined as
\begin{align}
\overrightarrow{\mu_s}\doteq
\begin{bmatrix}
\mu(a_1|s) \\
\vdots\\
\mu(a_{|\mathcal{A}|}|s)
\end{bmatrix}.
\end{align}
Since \eqref{eq: sum1 appendix linear} is linear, the constraint \eqref{eq: sum 1 bandits appendix} is affine and thus convex \citep{boyd2004convexopt}.

For all $s$, the feasible set of
\eqref{eq: nonnegative bandits appendix} is the non-negative orthant, defined as
\begin{align}
    \mathbb{R}_+^{|\mathcal{A}|}\doteq \{\mu(\cdot|s)\in\mathbb{R}^{|\mathcal{A}|}\mid \mu(a|s)\geq 0, \forall a\}.
\end{align}
Since the non-negative orthant forms a convex cone and is known to be a convex set \citep{boyd2004convexopt}, we conclude that this constraint's feasible set is convex.

Next, we define the vector of costs for all $s$ as
\begin{align}
\mathbf{c}_s\doteq
\begin{bmatrix}
c(s, a_1) \\
\vdots\\
c(s, a_{|\mathcal{A}|})
\end{bmatrix}.
\end{align}
Then, for all $\epsilon$ and $s$, the safety constraint \eqref{eq: safety constraint bandits appendix} can be rewritten as
\begin{align}
\mathbf{c}_s^\top \overrightarrow{\mu_s}\leq \delta_\epsilon(s),
\end{align}
which is a linear inequality in $\mu$. Thus, its feasible set is in a convex half-space. Because all the constraints are convex, we conclude that the feasible set $\mathcal{F}$ in \eqref{eq: feasible set bandits} is convex.

Finally, we examine the minimization objective \eqref{eq: optimization object bandits}, where $\pi$ and $r$ are fixed and independent of the behavior policy $\mu$. For all $s$, we express the objective function as
\begin{align}
\E_{a\sim\mu}[\rho(a|s)^2r(s,a)^2]=\sum_{a\in\{a|\mu(a|s)>0\}}\frac{\pi(a|s)^2r(s,a)^2}{\mu(a|s)}.
\end{align}
Then, for each $a$,
we decompose the objective function as 
\begin{align}
f_a(\mu(a|s))\doteq\frac{\pi(a|s)^2r(s,a)^2}{\mu(a|s)}.\label{eq: decomposed objective bandits}
\end{align}
Taking the first and second derivatives of $f_a$, we get
\begin{align}
f'_a(\mu(a|s))=-\frac{\pi(a|s)^2r(s,a)^2}{\mu(a|s)^2},\\
f''_a(\mu(a|s))=\frac{2\pi(a|s)^2r(s,a)^2}{\mu(a|s)^3}.
\end{align}
Since $\forall s,a$, $f''_a(\mu(a|s))\geq 0$, we know that \eqref{eq: decomposed objective bandits} is convex for all $a$. Then, as a summation of convex functions, \eqref{eq: optimization object bandits} is also convex.
In conclusion, by the convexity of the feasible set $\mathcal{F}$ and the objective function \eqref{eq: optimization object bandits}, we obtain the convexity of the constrained optimization problem in Lemma~\ref{lemma: convex bandits}.

For feasibility, note that by Lemma~\ref{lemma: optimal bandits}, $\pi\in\mathcal{F}$, which is the feasible set. Thus, we confirm the feasibility in Lemma~\ref{lemma: convex bandits}.

\end{proof}
\subsection{Proof of Lemma~\ref{lem rl pdis unbaised}}
\label{appendix: rl unbias}
\begin{proof}
  We proceed via induction.
  For $t = T-1$,
  we have
  \begin{align}
\E\left[\pdisg(\tau^{\mu_{t:T-1}}_{t:T-1}) \mid S_t\right] =& \E\left[\rho_t R_{t+1} \mid S_t\right] = \E\left[\rho_t q_{\pi, t}(S_t, A_t) \mid S_t \right] \\
    =& \E_{A_t \sim \pi_t(\cdot \mid S_t)}\left[q_{\pi, t}(S_t, A_t) \mid S_t\right] \explain{Lemma~\ref{lemma: unbias bandits}} \\
    =& v_{\pi, t}(S_t).
  \end{align}
  For $t \in [T-2]$,
  we have
\begin{align}
&\E\left[\pdisg(\tau^{\mu_{t:T-1}}_{t:T-1}) \mid S_t\right] \\
=& \E\left[\rho_t R_{t+1} + \rho_t\pdisg(\tau^{\mu_{t+1:T-1}}_{t+1:T-1}) \mid S_t\right] \\
=& \E\left[\rho_t R_{t+1} \mid S_t\right] + \E\left[\rho_t\pdisg(\tau^{\mu_{t+1:T-1}}_{t+1:T-1}) \mid S_t\right] \\
\explain{Law of total expectation}
=& \E\left[\rho_t R_{t+1} \mid S_t\right] + \E_{A_t \sim \mu_t(\cdot \mid S_t), S_{t+1} \sim p(\cdot \mid S_t, A_t)}\left[ \E\left[\rho_t\pdisg(\tau^{\mu_{t+1:T-1}}_{t+1:T-1}) \mid S_t, A_t, S_{t+1}\right] \mid S_t \right] \\
\explain{Conditional independence and Markov property}
=& \E\left[\rho_t R_{t+1} \mid S_t\right] + \E_{A_t \sim \mu_t(\cdot \mid S_t), S_{t+1} \sim p(\cdot \mid S_t, A_t)}\left[ \rho_t \E\left[\pdisg(\tau^{\mu_{t+1:T-1}}_{t+1:T-1}) \mid S_{t+1}\right] \mid S_t \right] \\
=& \E\left[\rho_t R_{t+1} \mid S_t\right] + \E_{A_t \sim \mu_t(\cdot \mid S_t), S_{t+1} \sim p(\cdot \mid S_t, A_t)}\left[ \rho_t v_{\pi, t+1}(S_{t+1}) \mid S_t \right]
\explain{Inductive hypothesis} \\
=& \E_{A_t \sim \mu_t(\cdot \mid S_t)}\left[\rho_t q_{\pi, t}(S_t, A_t) \mid S_t\right] \explain{Definition of $q_{\pi, t}$} \\
=& \E_{A_t \sim \pi_t(\cdot \mid S_t)}\left[q_{\pi, t}(S_t, A_t) \mid S_t\right] \explain{Lemma~\ref{lemma: unbias bandits}} \\
=& v_{\pi, t}(S_t),
\end{align}
  which completes the proof.
\end{proof}

\subsection{Proof of Theorem~\ref{theorem: convex rl}}\label{appendix: convex rl}
\begin{proof}
We first define the set of feasible policies as 
\begin{align}
\textstyle\mathcal{F}\doteq\{\mu\in \Lambda\mid \forall \epsilon,t,s,\E_{a\sim \mu_t}[v^c_{\mu,t}(s )]\leq \delta_{\epsilon,t}(s)\}.\label{eq: feasible set rl}
\end{align}
We begin by examining each constraint.
In the first constraint of $\Lambda$ \eqref{eq: definition lambda rl}, 
\begin{align}
 \forall t, s, a, \mu_t(a|s) = 0 \implies
  \pi_t(a|s)q_{\pi, t}(s, a) = 0 .   \label{eq: rl coverage constraint}
\end{align}
The feasible set of \eqref{eq: rl coverage constraint} is a linear subspace of $\mathbb{R}^{|\mathcal{A}|}$ defined by a set of linear equations. Thus, this feasible set is convex.

Next, we decompose the other constraint of $\Lambda$ \eqref{eq: definition lambda rl}, $\mu_t(\cdot|s)\in \Delta^{|\fA| - 1}$, into two constraints:
\begin{align}
\sum_a\mu_t(a|s)=1,\label{eq: sum 1 rl appendix}\\
\forall a, \mu_t(a|s)\geq 0. \label{eq: nonnegative rl appendix}
\end{align}
For all $t$ and $s$, in \eqref{eq: sum 1 rl appendix}, the feasible set can be written as
\begin{align}
\mathbf{1}^\top\overrightarrow{\mu_{s,t}}=1,    \label{eq: sum1 appendix linear rl}
\end{align}
where $\mathbf{1}\in \mathbb{R}^{|\mathcal{A|}}$ is the vector of ones
and $\overrightarrow{\mu_{s,t}}\in \mathbb{R}^{|\mathcal{A}|}$ is defined as
\begin{align}
\overrightarrow{\mu_{s,t}}\doteq
\begin{bmatrix}
\mu_t(a_1|s) \\
\vdots\\
\mu_t(a_{|\mathcal{A}|}|s)
\end{bmatrix}.
\end{align}
Since \eqref{eq: sum1 appendix linear rl} is linear, the feasible set of constraint \eqref{eq: sum 1 rl appendix} is affine and thus convex \citep{boyd2004convexopt}.

For all $t$ and $s$, the feasible set for the constraint in \eqref{eq: nonnegative rl appendix} is the non-negative orthant, defined as
\begin{align}
    \mathbb{R}_+^{|\mathcal{A}|}\doteq \{\mu_t(\cdot|s)\in\mathbb{R}^{|\mathcal{A}|}\mid \mu_t(a|s)\geq 0, \forall a\}.
\end{align}
Since the non-negative orthant forms a convex cone and is known to be a convex set \citep{boyd2004convexopt}, we conclude that this constraint is convex.

Next, we define the vector of the state-action value function for the cost $c$ for each $s$ as
\begin{align}
\mathbf{q}_{\mu,t}\doteq
\begin{bmatrix}
q^c_{\mu,t}(s,a_1) \\
\vdots\\
q^c_{\mu,t}(s,a_{|\mathcal{A}|})
\end{bmatrix}.
\end{align}
Then, for all $\epsilon$, $t$ and $s$, the safety constraint \eqref{eq: safety constraint rl appendix} can be rewritten as
\begin{align}
\mathbf{q}_{\mu,t}^\top \overrightarrow{\mu_{s,t}}\leq \delta_{\epsilon,t}(s),
\end{align}
which is a linear inequality in $\mu_t$. Thus, its feasible set is a convex half-space. Because all the constraints' feasible sets are convex, we conclude that the feasible set $\mathcal{F}$ in \eqref{eq: feasible set rl} is convex.

To prove Theorem~\ref{theorem: convex rl}, we express the objective function as
\begin{align}
\E_{a\sim\mu_t}[\rho_t^2\tilde{r}_t(s,a)]=\sum_{a\in\{a|\mu_t(a|s)>0\}}\frac{\pi_t(a|s)^2\tilde{r}_t(s,a)}{\mu_t(a|s)},\label{eq: decomposed objective rl}
\end{align}
where $\tilde{r}$ in \eqref{eq: extended reward} is defined as 
\begin{align}
\textstyle
\tilde{r}_t(s,a) 
\doteq&
\begin{cases}
r_{\pi,t}(s,a)^2 & t = T-1,\\
\nu_{\pi,t}(s,a)
+ q_{\pi, t}(s,a)^2+\E_{S_{t+1}}\left[\V\left(\pdisg(\tau^{\mu^*_{t+1:T-1}}_{t+1:T-1})\mid S_{t+1}\right) \mid s,a\right]  & t \in [T-2].
\end{cases}
\end{align}
Here, $\tilde{r}_t$ can be learned with logged offline data, as detailed in Algorithm~\ref{alg: safe algorithm}, and it is unrelated to $\mu_t$.
Then, for each $a$,
we decompose the objective function as 
\begin{align}
f_a(\mu_t(a|s))\doteq\frac{\pi_t(a|s)^2\tilde{r}_t(s,a)}{\mu_t(a|s)}.\label{eq: decomposed objective a rl}
\end{align}
Taking the first and second derivatives of $f_a$, we get
\begin{align}
f'_a(\mu_t(a|s))=-\frac{\pi_t(a|s)^2\tilde{r}(s,a)}{\mu_t(a|s)^2},\\
f''_a(\mu_t(a|s))=\frac{2\pi_t(a|s)^2\tilde{r}(s,a)}{\mu_t(a|s)^3}.
\end{align}
Notice that the extended reward $\tilde{r}$ defined in \eqref{eq: extended reward} is non-negative, since all the summands are non-negative.
Thus, $\forall t,s,a$, $f''_a(\mu_t(a|s))\geq 0$, and we know that \eqref{eq: decomposed objective a rl} is convex for all $a$. Then, as a summation of convex functions, \eqref{eq: optimization object rl} is also convex.
In conclusion, by the convexity of the feasible set $\mathcal{F}$ and the objective function \eqref{eq: optimization object rl}, we obtain the convexity of the constrained optimization problem in Theorem~\ref{theorem: convex rl}.

For feasibility, we show that the set of feasible policies \eqref{eq: feasible set rl} is non-empty.
Because $\epsilon \in [0, \infty)$, for the safety constraint, we have
\begin{align}
\E_{a\sim \pi_t}[v^c_{\mu,t}(s)]\leq (1+\epsilon)\E_{a\sim \pi_t}[v^c_{\mu,t}(s)]=\delta_{\epsilon, t}(s).
\end{align}
By the definition of $\Lambda$ \eqref{eq: definition lambda rl}, $\forall t$, $\pi_t\in \Lambda$.  
Therefore, the set of feasible policies \eqref{eq: feasible set rl} is non-empty. Thus, the constrained optimization problem in Theorem~\ref{theorem: convex rl} is feasible.
\end{proof}

\subsection{Proof of Theorem~\ref{theorem: optimal rl}}
\label{appendix: optimal rl}
To prove Theorem~\ref{theorem: optimal rl}, we first restate a recursive expression of the variance $  \V\left(\pdisg(\tau^{\mu_{t:T-1}}_{t:T-1})\mid S_t\right)$ for all $\mu\in \Lambda$ from \citet{liu2024efficient}, and present its proof for completeness.
\begin{lemma}\label{lem:recursive-var}
For any $\mu \in \Lambda$, for $t = T-1$,
\begin{align}
\V\left(\pdisg(\tau^{\mu_{t:T-1}}_{t:T-1})\mid S_t\right) = \E_{A_t \sim \mu_t}\left[\rho_t^2 q_{\pi, t}^2(S_t, A_t) \mid S_t\right] -  v_{\pi, t}^2(S_t),
\end{align}
for $t \in [T-2]$,
\begin{align}
&\V\left(\pdisg(\tau^{\mu_{t:T-1}}_{t:T-1})\mid S_t\right) \\
=& \E_{A_t\sim \mu_t}\left[\rho_t^2 \left(\E_{S_{t+1}}\left[\V\left(\pdisg(\tau^{\mu_{t+1:T-1}}_{t+1:T-1})\mid S_t\right) \mid S_t, A_t\right] + \nu_{\pi,t}(S_t, A_t) + q_{\pi, t}^2(S_t, A_t)\right) \mid S_t\right] \\
&- v_{\pi, t}^2(S_t).
\end{align}
\end{lemma}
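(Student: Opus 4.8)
The plan is to prove this identity directly, treating it as a \emph{one-step} recursion rather than an induction (the induction was already absorbed into the unbiasedness result). The only ingredients needed are the recursive form of the PDIS estimator \eqref{eq:PDIS-recursive}, the unbiasedness guarantee of Lemma~\ref{lem rl pdis unbaised}, the Bellman relation for $q_{\pi,t}$, and two nested applications of the law of total variance. For the base case $t = T-1$, the estimator collapses to $\pdisg(\tau^{\mu_{T-1:T-1}}_{T-1:T-1}) = \rho_{T-1} R_T$; I would expand its variance as $\E[\rho_{T-1}^2 R_T^2 \mid S_{T-1}] - \E[\rho_{T-1} R_T \mid S_{T-1}]^2$, use that $R_T = q_{\pi, T-1}(S_{T-1}, A_{T-1})$ holds deterministically given $(S_{T-1}, A_{T-1})$ to rewrite the first term, and invoke Lemma~\ref{lem rl pdis unbaised} to identify the second term as $v_{\pi, T-1}^2(S_{T-1})$.

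For $t \in [T-2]$, I would set $X \doteq \pdisg(\tau^{\mu_{t:T-1}}_{t:T-1})$ and $Y \doteq \pdisg(\tau^{\mu_{t+1:T-1}}_{t+1:T-1})$, so $X = \rho_t(R_{t+1} + Y)$ by \eqref{eq:PDIS-recursive}. The first total-variance split conditions on $(S_t, A_t, S_{t+1})$, at which point $\rho_t$ and $R_{t+1} = r(S_t, A_t)$ are fixed and $Y$ depends only on $S_{t+1}$ by the Markov property; this yields $\V(X \mid S_t) = \E[\rho_t^2 \V(Y \mid S_{t+1}) \mid S_t] + \V\!\left(\rho_t(R_{t+1} + v_{\pi, t+1}(S_{t+1})) \mid S_t\right)$, where $\E[Y \mid S_{t+1}] = v_{\pi, t+1}(S_{t+1})$ comes from Lemma~\ref{lem rl pdis unbaised}. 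The first summand is already the $\V(Y)$-term of the claim.

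For the remaining summand I would apply the law of total variance a second time, conditioning on $(S_t, A_t)$. The ``explained'' piece uses the Bellman identity $r(S_t, A_t) + \E[v_{\pi, t+1}(S_{t+1}) \mid S_t, A_t] = q_{\pi, t}(S_t, A_t)$ to collapse the conditional mean to $\rho_t q_{\pi, t}(S_t, A_t)$, whose variance over $A_t \sim \mu_t$ is $\E[\rho_t^2 q_{\pi, t}^2 \mid S_t] - v_{\pi, t}^2(S_t)$ (again using Lemma~\ref{lem rl pdis unbaised} for the mean $v_{\pi, t}(S_t)$). The ``unexplained'' piece is $\E[\rho_t^2 \V(v_{\pi, t+1}(S_{t+1}) \mid S_t, A_t) \mid S_t] = \E[\rho_t^2 \nu_{\pi, t}(S_t, A_t) \mid S_t]$ by the definition of $\nu_{\pi,t}$ in \eqref{def:nu}. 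Collecting the three $\rho_t^2$-weighted terms under a single expectation over $A_t \sim \mu_t$ reproduces the stated formula.

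The main obstacle is bookkeeping the nested conditioning correctly: one must justify that conditioning on the full triple $(S_t, A_t, S_{t+1})$ reduces $\V(Y \mid S_t, A_t, S_{t+1})$ to $\V(Y \mid S_{t+1})$ via the Markov property, and must track precisely which quantities ($\rho_t$, $R_{t+1}$, and $v_{\pi,t+1}(S_{t+1})$) are measurable with respect to each conditioning when separating ``explained'' from ``unexplained'' variance. The importance ratio $\rho_t$ passes through as a fixed multiplier because it is $(S_t, A_t)$-measurable, but every step that converts a $\rho_t$-weighted mean back into a value function silently relies on the enlarged-support unbiasedness of Lemma~\ref{lem rl pdis unbaised}, so I would verify that each such conversion remains valid for $\mu \in \Lambda$ and not merely for $\mu \in \Lambda_-$.
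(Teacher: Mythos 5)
Your proposal is correct and uses essentially the same ingredients as the paper's proof: the recursive form \eqref{eq:PDIS-recursive}, two nested applications of the law of total variance, the Markov property, the Bellman identity for $q_{\pi,t}$, and the enlarged-support unbiasedness of Lemma~\ref{lem rl pdis unbaised} to convert $\rho_t$-weighted means into value functions. The only difference is the order of conditioning (you split on the full triple $(S_t,A_t,S_{t+1})$ first and then on $(S_t,A_t)$, while the paper splits on $A_t$ first and then on $S_{t+1}$ inside the inner variance), which regroups the same three terms and changes nothing of substance; your observation that no genuine inductive hypothesis on the variance is needed is also consistent with what the paper's proof actually does.
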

\begin{proof}
For completeness, we provide the proof from \citet{liu2024efficient}.
We proceed via induction. 
  When $t =  T-1$, we have
  \begin{align}
  \V\left(\pdisg(\tau^{\mu_{t:T-1}}_{t:T-1})\mid S_t\right) =& \V\left(\rho_t r(S_t, A_t)\mid S_t\right) \\
  =& \V\left(\rho_t q_{\pi, t}(S_t, A_t)\mid S_t\right) \\
  =&  \E_{A_t}\left[\rho_t^2 q_{\pi, t}(S_t, A_t)^2 \mid S_t\right] -  v_{\pi, t}(S_t)^2,
  \end{align}
  
  When $t\in [T-2]$, we have
\begin{align}
\label{eq tmp4}
&\V\left(\pdisg(\tau^{\mu_{t:T-1}}_{t:T-1})\mid S_t\right) \\
=& \E_{A_t}\left[ \V\left(\pdisg(\tau^{\mu_{t:T-1}}_{t:T-1})\mid S_t, A_t\right)\mid S_t\right] + \V_{A_t}\left(\E\left[\pdisg(\tau^{\mu_{t:T-1}}_{t:T-1}) \mid S_t, A_t\right]\mid S_t\right) 
\explain{Law of total variance} \\
=& \E_{A_t}\left[ \rho_t^2 \V\left(r(S_t,A_t) + \pdisg(\tau^{\mu_{t+1:T-1}}_{t+1:T-1}) \mid S_t, A_t\right)\mid S_t\right] \\
&+ \V_{A_t}\left(\rho_t \E\left[r(S_t,A_t) + \pdisg(\tau^{\mu_{t+1:T-1}}_{t+1:T-1}) \mid S_t, A_t\right]\mid S_t\right)  
\explain{By \eqref{eq:PDIS-recursive}}
\\
=& \E_{A_t}\left[ \rho_t^2 \V\left(\pdisg(\tau^{\mu_{t+1:T-1}}_{t+1:T-1}) \mid S_t, A_t\right)\mid S_t\right] + \V_{A_t}\left(\rho_t \E\left[r(S_t,A_t) + \pdisg(\tau^{\mu_{t+1:T-1}}_{t+1:T-1}) \mid S_t, A_t\right]\mid S_t\right) \explain{Deterministic reward $r$} \\
=& \E_{A_t}\left[ \rho_t^2 \V\left(\pdisg(\tau^{\mu_{t+1:T-1}}_{t+1:T-1}) \mid S_t, A_t\right)\mid S_t\right] + \V_{A_t}\left(\rho_t q_{\pi, t}(S_t, A_t)\mid S_t\right).
\end{align}

Further decomposing the first term, we have
  \begin{align}
    \label{eq tmp3}
  &\V\left(\pdisg(\tau^{\mu_{t+1:T-1}}_{t+1:T-1}) \mid S_t, A_t\right) \\
  =& \E_{S_{t+1}}\left[\V\left(\pdisg(\tau^{\mu_{t+1:T-1}}_{t+1:T-1}) \mid S_t, A_t, S_{t+1}\right) \mid S_t, A_t\right] \\
  &+ \V_{S_{t+1}}\left(\E\left[\pdisg(\tau^{\mu_{t+1:T-1}}_{t+1:T-1}) \mid S_t, A_t, S_{t+1}\right]\mid S_t, A_t\right) 
  \explain{Law of total variance}
  \\
  =& \E_{S_{t+1}}\left[\V\left(\pdisg(\tau^{\mu_{t+1:T-1}}_{t+1:T-1}) \mid S_{t+1}\right) \mid S_t, A_t\right] + \V_{S_{t+1}}\left(\E\left[\pdisg(\tau^{\mu_{t+1:T-1}}_{t+1:T-1}) \mid S_{t+1}\right]\mid S_t, A_t\right) \explain{Markov property} \\
  =& \E_{S_{t+1}}\left[\V\left(\pdisg(\tau^{\mu_{t+1:T-1}}_{t+1:T-1}) \mid S_{t+1}\right) \mid S_t, A_t\right] + \V_{S_{t+1}}\left(v_{\pi, t+1}(S_{t+1})\mid S_t, A_t\right). \explain{Lemma~\ref{lem rl pdis unbaised}}
  \end{align}
 Then, plugging~\eqref{eq tmp3} back to~\eqref{eq tmp4} yields
  \begin{align}
    &\V\left(\pdisg(\tau^{\mu_{t:T-1}}_{t:T-1})\mid S_t\right) \\
    =&\E_{A_t}\left[\rho_t^2 \left(\E_{S_{t+1}}\left[\V\left(\pdisg(\tau^{\mu_{t+1:T-1}}_{t+1:T-1}) \mid S_{t+1}\right) \mid S_t, A_t\right] + \V_{S_{t+1}}(v_{\pi,t}(S_{t+1})\mid S_t=s,A_t=a)\right) \mid S_t\right] \\
    &+ \V_{A_t}\left(\rho_t q_{\pi, t}(S_t, A_t)\mid S_t\right) \\
    =&\E_{A_t}\left[\rho_t^2 \left(\E_{S_{t+1}}\left[\V\left(\pdisg(\tau^{\mu_{t+1:T-1}}_{t+1:T-1}) \mid S_{t+1}\right) \mid S_t, A_t\right] + \V_{S_{t+1}}(v_{\pi,t}(S_{t+1})\mid S_t=s,A_t=a)\right) \mid S_t\right] \\
    &+ \E_{A_t}\left[\rho_t^2 q_{\pi, t}(S_t, A_t)^2\mid S_t\right] - \left(\E_{A_t}\left[\rho_t q_{\pi, t}(S_t, A_t) \mid S_t\right]\right)^2 \\
    =&\E_{A_t}\left[\rho_t^2 \left(\E_{S_{t+1}}\left[\V\left(\pdisg(\tau^{\mu_{t+1:T-1}}_{t+1:T-1}) \mid S_{t+1}\right) \mid S_t, A_t\right] + \V_{S_{t+1}}(v_{\pi,t}(S_{t+1})\mid S_t=s,A_t=a)\right) \mid S_t\right] \\
    &+ \E_{A_t}\left[\rho_t^2 q_{\pi, t}(S_t, A_t)^2\mid S_t\right] -  v_{\pi, t}(S_t)^2,\explain{Lemma~\ref{lemma: unbias bandits}}\\
=&\E_{A_t}\left[\rho_t^2 \left(\E_{S_{t+1}}\left[\V\left(\pdisg(\tau^{\mu_{t+1:T-1}}_{t+1:T-1}) \mid S_{t+1}\right) \mid S_t, A_t\right] + \nu_{\pi,t}(S_t,A_t)+q_{\pi, t}(S_t, A_t)^2\right) \mid S_t\right] \\
& -  v_{\pi, t}(S_t)^2,\explain{Definition of $\nu$}\\
  \end{align}
  which completes the proof.
  \end{proof}
  Then, with the extended reward $\tilde{r}$ in \eqref{eq: extended reward} defined as
\begin{align}
\textstyle
\tilde{r}_t(s,a) 
\doteq&
\begin{cases}
r_{\pi,t}(s,a)^2 & t = T-1,\\
\nu_{\pi,t}(s,a)
+ q_{\pi, t}(s,a)^2+\E_{S_{t+1}}\left[\V\left(\pdisg(\tau^{\mu^*_{t+1:T-1}}_{t+1:T-1})\mid S_{t+1}\right) \mid s,a\right]  & t \in [T-2],
\end{cases}
\end{align}
we can express the variance in a succinct form
\begin{align}
\V\left(\pdisg(\tau^{\mu^*_{t:T-1}}_{t:T-1})\mid S_t=s\right) =  \E_{a\sim\mu}[\rho_t^2\tilde{r}_t(s,a)]- v_{\pi,t}(s)^2, \quad \forall s,t. \label{eq: variance r - v}
\end{align}
Now, we restate Theorem~\ref{theorem: optimal rl} and present its proof.
\restatestepwiseoptimal*
In Appendix~\ref{appendix: convex rl}, we show that $\forall t$, $\pi_t\in \mathcal{F}$, where $\mathcal{F}$ in \eqref{eq: feasible set rl} is the set of feasible policies for the constrained optimization problem in Theorem~\ref{theorem: convex rl}. Recall that $\mu_t^*$ is defined as the optimal solution to the problem \eqref{eq: optimization object rl}, i.e.,
\begin{align}
\mu^*_t \doteq&\argmin_{\mu_t\in \mathcal{F}} \textstyle \E_{a\sim\mu_t}[\rho^2_t\tilde{r}(s,a)]. \label{eq: mu star argmin rl}
\end{align}
Thus, $\forall t,s$,
\begin{align}
&\V\left(\pdisg(\tau^{\mu^*_{t:T-1}}_{t:T-1})\mid S_t=s\right) \\
=&\E_{a\sim\mu^*_t}[\rho_t^2\tilde{r}_t(s,a)]- v_{\pi,t}(s)^2\explain{By \eqref{eq: variance r - v} }\\
\leq &\E_{a\sim\pi_t}[\rho_t^2\tilde{r}_t(s,a)]- v_{\pi,t}(s)^2\explain{By \eqref{eq: mu star argmin rl} and $\pi_t\in \mathcal{F}$}\\
=&\V\left(\pdisg(\tau^{\pi_{t:T-1}}_{t:T-1})\mid S_t=s\right), \explain{By \eqref{eq: variance r - v} }
\end{align}
which completes the proof.

\subsection{Proof of Theorem~\ref{theorem: optimal rl final}}
\label{appendix: optimal rl final}
\begin{proof}
We first prove the variance reduction property.
\begin{align}
 &\V\left(\pdisg(\tau^{\mu^*_{0:T-1}}_{0:T-1})\right) \\
=& \E_{S_0}\left[\V\left(\pdisg(\tau^{\mu^*_{0:T-1}}_{0:T-1}) \mid S_0\right)\right] + \V_{S_0}\left(\E\left[\pdisg(\tau^{\mu^*_{0:T-1}}_{0:T-1}) \mid S_0\right]\right) \explain{Law of Total Variance}\\
=& \E_{S_0}\left[\V\left(\pdisg(\tau^{\mu^*_{0:T-1}}_{0:T-1}) \mid S_0\right)\right] + \V_{S_0}\left(v_{\pi, 0}(S_0)\right)  \explain{By Lemma~\ref{lem rl pdis unbaised} and $\mu^*\in \Lambda$}\\
\leq &\E_{S_0}\left[\V\left(\pdisg(\tau^{\pi_{0:T-1}}_{0:T-1}) \mid S_0\right)\right] + \V_{S_0}\left(v_{\pi, 0}(S_0)\right)\explain{Theorem~\ref{theorem: optimal rl}}\\
=&\E_{S_0}\left[\V\left(\pdisg(\tau^{\pi_{0:T-1}}_{0:T-1}) \mid S_0\right)\right] + \V_{S_0}\left(\E\left[\pdisg(\tau^{\pi_{0:T-1}}_{0:T-1}) \mid S_0\right]\right) \explain{By Lemma~\ref{lem rl pdis unbaised} and $\pi\in \Lambda$}\\
=&\V\left(\pdisg(\tau^{\pi_{0:T-1}}_{0:T-1})\right). \explain{Law of Total Variance}
\end{align}
Next, we prove the safety constraint satisfaction.
\begin{align}
&J^c(\mu^*)\\
=&\sum_sp_0(s)v^c_{\mu^*,0}(s)\\
=&\sum_sp_0(s)
\E_{a\sim\mu^*_0}[q^c_{\mu^*,0}(s,a)]\\
\leq &\sum_sp_0(s)
\delta_{\epsilon,0}(s)\explain{Theorem~\ref{theorem: convex rl}}\\
=&\sum_sp_0(s)(1+\epsilon)v^c_{\pi,0}(s)\explain{By \eqref{eq: constraint j}}\\
=&(1+\epsilon)\sum_sp_0(s)v^c_{\pi,0}(s)\\
=&(1+\epsilon)J^c(\pi),
\end{align}
which completes the proof.
\end{proof}
\subsection{Proof of Lemma~\ref{lemma: recursive tilde r}}\label{appendix: recursive tilde r}
\begin{proof}
$\forall s,a$, when $t=T-1$, $\tilde{r}_t(s,a)=r_{\pi, t}(s,a)^2 $, as defined in \eqref{eq: extended reward}. For $t\in[T-2]$,
\begin{align}
&\tilde{r}_t(s,a)\\    
=&\nu_{\pi,t}(s,a)
+ q_{\pi, t}(s,a)^2+\E_{S_{t+1}}\left[\V\left(\pdisg(\tau^{\mu^*_{t+1:T-1}}_{t+1:T-1})\mid S_{t+1}\right) \mid s,a\right] \explain{By \eqref{eq: extended reward}}\\
=&\nu_{\pi,t}(s,a)
+ q_{\pi, t}(s,a)^2\\
&+\sum_{s'}p(s'|s,a)\left[\E_{A_{t+1}\sim \mu^*_{t+1}}\left[\rho_{t+1}^2 \left(\E_{S_{t+2}}\left[\V\left(\pdisg(\tau^{\mu^*_{t+2:T-1}}_{t+2:T-1})\mid S_{t+2}\right) \mid S_{t+1}, A_{t+1}\right] \right.\right.\right.\\
&\left.\left.\left.+ \nu_{\pi,t+1}(S_{t+1},A_{t+1})+ q_{\pi, t+1}(S_{t+1}, A_{t+1})^2\right) \mid S_{t+1}=s'\right]- v_{\pi, t+1}(s')^2\right] \explain{By Lemma~\ref{lem:recursive-var}}\\
=&\nu_{\pi,t}(s,a)
+ q_{\pi, t}(s,a)^2+\sum_{s'}p(s'|s,a)\left[\E_{A_{t+1}\sim\mu^*_{t+1}}\qty[\rho_{t+1}^2\tilde{r}_{\pi,t+1}(S_{t+1},A_{t+1})\mid S_{t+1}=s']\right.\\
&\left.- v_{\pi, t+1}(s')^2 \right]\explain{By \eqref{eq: extended reward}}\\
=&\nu_{\pi,t}(s,a)
+ q_{\pi, t}(s,a)^2+\sum_{s',a'}p(s'|s,a)\qty[\rho_{t+1}\pi_{t+1}(a'|s')\tilde{r}_{\pi,t+1}(s',a')-v_{\pi, t+1}(s')^2 ].\\
=& \V_{S_{t+1}}\left(v_{\pi, t+1}(S_{t+1})\mid S_t=s, A_t=a\right)+ q_{\pi, t}(s,a)^2\\
&+\sum_{s',a'}p(s'|s,a)\qty[\rho_{t+1}\pi_{t+1}(a'|s')\tilde{r}_{\pi,t+1}(s',a')-v_{\pi, t+1}(s')^2 ]\explain{Definition of $\nu$}\\
=& \E_{S_{t+1}}\left[v_{\pi, t+1}(S_{t+1})^2 \mid S_t=s, A_t=a\right] - \E_{S_{t+1}}\left[v_{\pi, t+1}(S_{t+1}) \mid S_t=s, A_t=a\right]^2 \\
&+ q_{\pi, t}(s,a)^2+\sum_{s',a'}p(s'|s,a)\qty[\rho_{t+1}\pi_{t+1}(a'|s')\tilde{r}_{\pi,t+1}(s',a')-v_{\pi, t+1}(s')^2 ]\\
=&\sum_{s'}p(s'|s,a)v_{\pi,t+1}(s')^2 - (q_{\pi,t}(s,a) - r(s,a))^2+ q_{\pi, t}(s,a)^2\\
&+\sum_{s',a'}p(s'|s,a)\rho_{t+1}\pi_{t+1}(a'|s')\tilde{r}_{\pi,t+1}(s',a')-\sum_{s'}p(s'|s,a)v_{\pi, t+1}(s')^2 \\
=&2q_{\pi,t}(s,a)r(s,a)- r(s,a)^2+\sum_{s',a'}p(s'|s,a)\rho_{t+1}\pi_{t+1}(a'|s')\tilde{r}_{\pi,t+1}(s',a')\\
=&2q_{\pi,t}(s,a)r(s,a)- r(s,a)^2+\E_{s' \sim p, a' \sim \pi}\qty[\frac{\pi_{t+1}(a'|s')}{\mu^*_{t+1}(a'|s')}\tilde{r}_{\pi,t+1}(s',a')] .
\end{align}
\end{proof}

\section{Experiment Details}
\label{append: experiment}
\subsection{GridWorld}
\begin{table}[h]
\centering
\begin{tabular}{lllll}
\toprule
Environment Size & On-policy MC & \textbf{Ours} & ODI & ROS \\
\midrule
1,000 & 1.000 & \textbf{0.547} & 0.460 & 0.953 \\
27,000 & 1.000 & \textbf{0.575} & 0.484 & 0.987 \\
\bottomrule
\end{tabular}
\caption{Relative variance for estimators on Gridworld. The relative variance is defined as the variance of each estimator divided by the variance of the on-policy Monte Carlo estimator. Numbers are averaged over 900 independent runs (30 target policies, each having 30 independent runs). Standard errors are plotted in Figure \ref{fig:gridworld}.
}
\label{table: gridworld variance}
\end{table}

\begin{table}[h]
\centering
\begin{tabular}{llllll}
\toprule
 Env Size& On-policy MC & \textbf{Ours} & ODI & ROS & Saved Cost Percentage \\
\midrule
10 & 1000 & \textbf{472} & 738 & 1035 & (1000 - 472)/1000 = \textbf{52.8\%} \\
30 & 1000 & \textbf{487} & 765 & 1049 & (1000 - 487)/1000 = \textbf{51.3\%} \\
\bottomrule
\end{tabular}
\caption{Cost needed to achieve the same estimation accuracy that on-policy Monte Carlo achieves with $1000$ episodes on Gridworld. Each number is averaged over 900 runs. Standard errors are plotted in Figure \ref{fig:gridworld_cost}.
}
\label{table: gridworld trajectory cost}
\end{table}
We conduct experiments on Gridworlds with $n^3=1,000$ and $n^3=27,000$ states, where for a Gridworld with size $n^3$, we set the width, height, and time horizon $T$ all to be $n$. The action space contains four different possible actions: up, down, left, and right. After taking an action, the agent has a probability of $0.9$ to move accordingly and a probability of $0.1$ to move uniformly at random. When the agent runs into a boundary, it stays in its current position. We randomly generate the reward function $r(s,a)$ and cost function $c(s,a)$. 
We consider $30$ randomly generated target policies with various performances.
The ground truth policy performance is estimated by the on-policy Monte Carlo method, running each target policy for $10^6$ episodes.
We experiment two different sizes of the Gridworld with a number of $1,000$ and $27,000$ states.

The offline dataset of each environment contains a total of $1,000$ episodes generated by $30$ policies with various performances. The performance of those policies ranges from completely random initialized policies to well-trained policies in each environment. 
For example, in Hopper, the performance of those $30$ policies ranges from around $18$ to around $2800$. We let offline data be generated by various policies to simulate the fact that offline data are from different past collections.

We learn functions $q_{\pi,t}, q^c_{\pi,t}$, and $\hat{r}_{\pi,t}$ using Fitted Q-Evaluation algorithms (FQE, \citet{le2019batch}) by passing data tuples in $\mathcal{D}_{\nu}$ from $t=T-1$ to $0$.
It is worth noticing that Fitted Q-Evaluation (FQE, \citet{le2019batch}) is a different algorithm from Fitted Q-Improvement (FQI). Importantly, Fitted Q-Evaluation is not prone to overestimate the action-value function $q_{\pi,t}$ because it does not have any $\max$ operator and does not change the policy. 
All hyperparameters are tuned offline
based on Fitted Q-learning loss. We leverage a one-hidden-layer neural network and test the neural network size with $[64,128,256]$. We then choose $64$ as the final size. 
We also test the learning rate for Adam optimizer with $[1\text{e}^{-5},1\text{e}^{-4},1\text{e}^{-3},1\text{e}^{-2}]$ and finally choose to use the default learning rate $1\text{e}^{-3}$ as learning rate for Adam optimizer \citep{kingma2014adam}.
For the benchmark algorithms, we use their reported hyperparameters \citep{zhong2022robust, liu2024efficient}.
Each policy has 30 independent runs, resulting in a total of $30\times 30=900$ runs.
Thus, each curve in Figure \ref{fig:gridworld}, Figure~\ref{fig:gridworld_cost} and each number in Table~\ref{table: gridworld cost}, Table~\ref{table: gridworld variance} and Table~\ref{table: gridworld trajectory cost} are averaged from 900 different runs over a wide range of policies, demonstrating a strong statistical significance.

\subsection{MuJoCo}

\begin{figure}[ht]
\begin{minipage}{0.18\textwidth}
\centering
\includegraphics[width=1\textwidth]{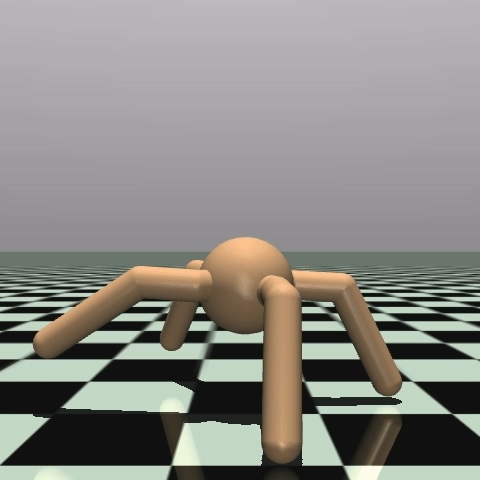}
\end{minipage}
\begin{minipage}{0.18\textwidth}
\centering \includegraphics[width=1\textwidth]{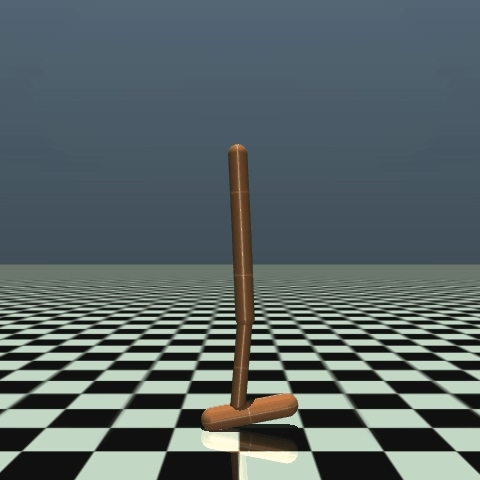}
\end{minipage}
\begin{minipage}{0.18\textwidth}
\centering \includegraphics[width=1\textwidth]{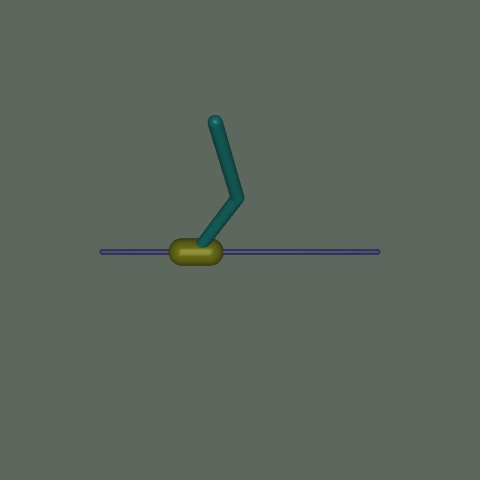}
\end{minipage}
\begin{minipage}{0.18\textwidth}
\centering \includegraphics[width=1\textwidth]{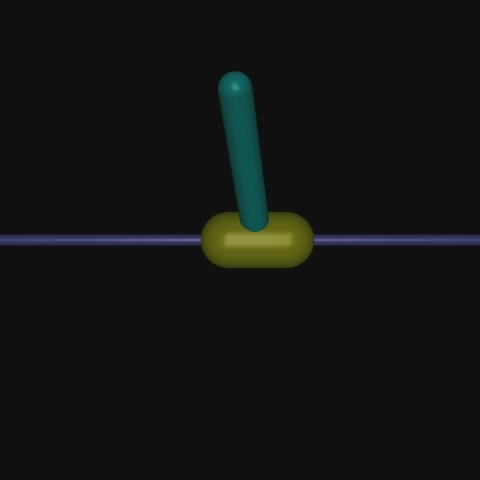}
\end{minipage}
\begin{minipage}{0.18\textwidth}
\centering \includegraphics[width=1\textwidth]{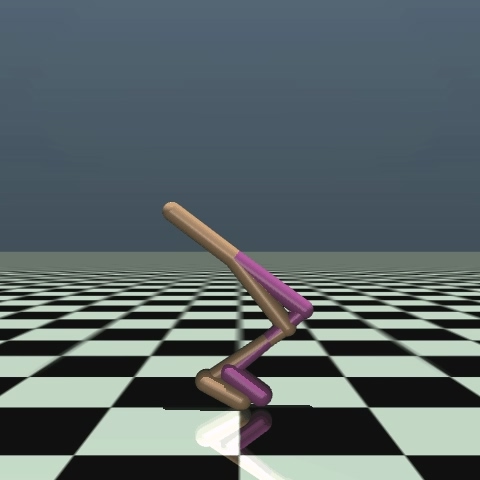}
\end{minipage}
\centering
\caption{
MuJoCo robot simulation tasks \citep{todorov2012mujoco}. Pictures are adapted from \citep{liu2024efficient}.
Environments from the left to the right are Ant, Hopper, InvertedDoublePendulum,  InvertedPendulum, and Walker.
} 
\label{fig:cart_pole_image}
\end{figure}
\begin{figure}[t]
\includegraphics[width=1\textwidth]{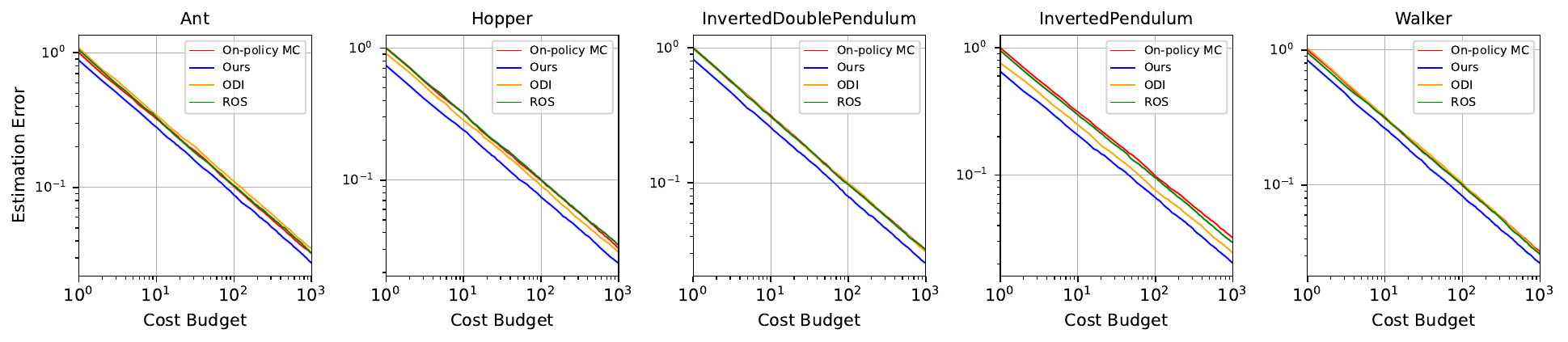}
\centering
\caption{
Results on MuJoCo with log-scale y-axis to show the error does not converge. 
Each curve is averaged over 900 runs (30 target policies, each having 30 independent runs). 
Shaded regions denote standard errors and are invisible for some curves because they are too small.
}
\label{fig:mujoco 1000}
\end{figure}

\begin{table}[H]
\begin{center}
\begin{small}
\begin{tabular}{lllll}
\toprule
& On-policy MC & Ours  & ODI  & ROS   \\
\midrule
Ant & 1.000 & \textbf{0.835} & 0.811 & 1.032 \\
Hopper & 1.000 & \textbf{0.596} & 0.542 & 1.005 \\
I. D. Pendulum & 1.000 & \textbf{0.778} & 0.724 & 0.992 \\
I. Pendulum & 1.000 & \textbf{0.439} & 0.351 & 0.900 \\
Walker & 1.000 & \textbf{0.728} & 0.696 & 0.908 \\
\bottomrule
\end{tabular}
\end{small}
\end{center}
\caption{Relative variance of estimators on MuJoCo. The relative variance is defined as the variance of each estimator divided by the variance of the on-policy Monte Carlo estimator. All numbers are averaged over 900 independent runs (30 target policies, each having 30 independent runs).}
\label{table: mujoco variance}
\end{table}

\begin{table}[H]
\begin{center}
\begin{small}
\begin{tabular}{lllll}
\toprule
& On-policy MC & Ours  & ODI  & ROS   \\
\midrule
Ant & 1.000 & \textbf{0.897} & 1.397 & 1.033 \\
Hopper & 1.000 & \textbf{0.930} & 1.523 & 1.021 \\
I. D. Pendulum & 1.000 & \textbf{0.876} & 1.399 & 1.012 \\
I. Pendulum & 1.000 & \textbf{0.961} & 1.743 & 0.990 \\
Walker & 1.000 & \textbf{0.953} & 1.485 & 1.061 \\
\bottomrule
\end{tabular}
\end{small}
\end{center}
\caption{Average trajectory cost on MuJoCo. Numbers are normalized by the cost of the on-policy estimator. ODI and ROS have much larger costs because they both ignore safety constraints. \textbf{Our method is the only method consistently achieving both variance reduction and safety constraint satisfaction.}}
\label{table: mujoco single cost}
\end{table}

MuJoCo is a physics engine with various stochastic environments, in which the goal is to control a robot to achieve different behaviors such as walking, jumping, and balancing. 
We construct $30$ policies in each environment, resulting a total of $150$ policies. The policies demonstrate a wide range of performance generated by 
the proximal policy optimization (PPO) algorithm \citep{schulman2017proximal} using the default PPO implementation in \citet{huang2022cleanrl}. 
Original MuJoCo environments are Markov decision processes (MDP) and do not have cost functions. We enhance it with cost functions to make it constrained Markov decision processes (CMDP). Specifically, the cost of the MuJoCo environments is built on the control cost of the robot. The control cost is the L2 norm of the action and is proposed by OpenAI Gymnasium \citep{openai2016gym}. This control cost is motivated by the fact that large actions in robots induce sudden changes in the robot's state and may cause safety issues.

We set each environment in MuJuCo to have a fixed time horizon $100$ in OpenAI Gymnasium \citep{towers2024gymnasium}.
Because our methods are designed for discrete action space,
we discretize the first dimension of the MuJoCo action space.
The remaining dimensions are then controlled by the PPO policies and are deemed as part of the environment.
The offline dataset for each environment contains $1,000$ episodes generated by $30$ policies with various performances, following the same method as in the Gridworld environments.
Functions $q_{\pi,t}, q^c_{\pi,t}$, and $\hat{r}_{\pi,t}$  are learned using the same way as in Gridworld environments. 
Notably, our algorithm is robust on hyperparameters, as all hyperparameters in Algorithm \ref{alg: safe algorithm} are tuned offline and are the same across all MuJoCo and Gridworld experiments.
Each policy in MuJoCo has 30 independent runs, resulting in a total of $30 \times 30 = 900$ runs.
As a result, curves in all figures are averaged from 900 different runs with a wide range of policies, showing a strong statistical significance.

\end{document}